\documentclass{article}

% if you need to pass options to natbib, use, e.g.:
\PassOptionsToPackage{numbers, compress}{natbib}
% before loading neurips_2024

% ready for submission
%\usepackage{neurips_2024}

% to compile a preprint version, e.g., for submission to arXiv, add add the
% [preprint] option:
\usepackage[preprint]{neurips_2024}

% to compile a camera-ready version, add the [final] option, e.g.:
%     \usepackage[final]{neurips_2024}

% to avoid loading the natbib package, add option nonatbib:
%\usepackage[nonatbib]{neurips_2024}

\usepackage[numbers]{natbib}
\usepackage[utf8]{inputenc} % allow utf-8 input
\usepackage[T1]{fontenc}    % use 8-bit T1 fonts
\usepackage{hyperref}       % hyperlinks
\usepackage{booktabs}       % professional-quality tables
\usepackage{amsfonts, amsmath, amssymb, amsthm, bbm}       % blackboard math symbols
\usepackage{nicefrac}       % compact symbols for 1/2, etc.
\usepackage{microtype}      % microtypography
\usepackage{xcolor}         % colors
\usepackage{adjustbox, comment}
\usepackage{graphicx}
\usepackage{subcaption}
\usepackage{array, wrapfig}

\usepackage{paralist}
\raggedbottom
\hbadness=10000
\sloppy

\DeclareMathOperator*{\argmin}{arg\,min}

\newtheoremstyle{new}
  {\topsep} % Space above
  {\topsep} % Space below
  {} % Body font
  {} % Indent amount
  {\bfseries} % Theorem head font
  {.} % Punctuation after theorem head
  {.5em} % Space after theorem head
  {} % Theorem head spec (can be left empty, meaning `normal')

%\newtheoremstyle{new}{5pt}{0pt}{\itshape}{}{\bfseries}{.}{1em}{}
\theoremstyle{new}
\newtheorem{Theorem}{Theorem}
\newtheorem{Corollary}{Corollary}

 % "letter-numbered" theorems

\newtheorem{Remark}{Remark}

%  \@startsection{paragraph}{4}%
%  {\z@}{3.25ex \@plus 1ex \@minus .2ex}{-1em}%
%  {\normalfont\normalsize\bfseries}%
%}

\title{Implicitly Guided Design with PropEn: \\ Match your Data to Follow the Gradient}
% Generative models with implicit control
%Property Enhancer - implicitly guided sampling
% Implicit optimization by matching sampels
% Generative models with implicit control
%Implicit optimization and sampling - Learning the gradient by gradient descent

% The \author macro works with any number of authors. There are two commands
% used to separate the names and addresses of multiple authors: \And and \AND.
%
% Using \And between authors leaves it to LaTeX to determine where to break the
% lines. Using \AND forces a line break at that point. So, if LaTeX puts 3 of 4
% authors names on the first line, and the last on the second line, try using
% \AND instead of \A nce\\
% \AND instead of \A nd before the third author name.

\author{%
 Nata\v{s}a Tagasovska
 \thanks{Prescient/MLDD, Genentech Research and Early Development}\\
  %\texttt{natasa.tagasovska@roche.com} \\
  \And
  Vladimir Gligorijevi\'c  \footnotemark[1]
  % \texttt{email} \\
  \And
  Kyunghyun Cho  \footnotemark[1]\thanks{Department of Computer Science, Center for Data Science, New York University}
  % \texttt{email} \\
  \And
  Andreas Loukas  \footnotemark[1]
  % \texttt{andreas.loukas@roche.com} \\
}

\begin{document}

\maketitle

\begin{abstract}
Across scientific domains, generating new models or optimizing existing ones while meeting specific criteria is crucial. Traditional machine learning frameworks for guided design use a generative model and a surrogate model (discriminator), requiring large datasets. However, real-world scientific applications often have limited data and complex landscapes, making data-hungry models inefficient or impractical. We propose a new framework, \emph{PropEn}, inspired by ``matching'', which enables implicit guidance without training a discriminator. By matching each sample with a similar one that has a better property value, we create a larger training dataset that inherently indicates the direction of improvement. Matching, combined with an encoder-decoder architecture, forms a domain-agnostic generative framework for property enhancement. We show that training with a matched dataset approximates the gradient of the property of interest while remaining within the data distribution, allowing efficient design optimization. Extensive evaluations in toy problems and scientific applications, such as therapeutic protein design and airfoil optimization, demonstrate PropEn's advantages over common baselines. Notably, the protein design results are validated with wet lab experiments, confirming the competitiveness and effectiveness of our approach.
\end{abstract}

%\begin{abstract}    In many fields, generating or optimizing models to meet specific criteria is essential. ML frameworks for guided design typically include a generative model and a discriminator, requiring large datasets. However, real-world applications often face constraints like limited data and complex functional landscapes, making data-hungry models impractical. We propose a new framework, \emph{PropEn}, inspired by "matching" for \emph{implicit guidance}, eliminating the need for a discriminator. By matching each sample with a better one for a specific property, we create a larger training dataset that includes improvement direction. This, combined with an encoder-decoder architecture, offers a domain-agnostic generative framework for property enhancement. We demonstrate that training with a matched dataset approximates the gradient of the property of interest, enabling efficient design optimization while staying close to the data manifold. Extensive evaluations in toy problems and scientific applications, such as therapeutic protein design and shape optimization, show PropEn's advantages over common baselines.
%\end{abstract}

\section{Introduction}
\label{sec:intro}

%In response to these limitations, we propose a novel approach—implicit guidance—in the realm of generative models. Implicit guidance offers several distinct advantages over conventional methods. Firstly, it operates effectively in low data regimes, where traditional techniques falter, making it particularly well-suited for real-world applications characterized by scarce data availability. Moreover, implicit guidance accommodates both continuous and discrete data, rendering it agnostic to specific properties or modalities. This flexibility ensures its applicability across a wide range of domains, from molecular design to shape optimization. Additionally, implicit guidance empowers practitioners to incorporate domain knowledge seamlessly by leveraging customizable distance metrics in the matching process.

Navigating the complex world of design %and optimization 
is a challenge in many fields, from engineering \citep{martins2022aerodynamic} to material science \citep{lookman2018materials, schmidt2019recent} and life sciences \citep{bradshaw2019model}. In life sciences, the goal may be to refine molecular structures for drug discovery \citep{bradshaw2019model}, focusing on properties like binding affinity or stability. In engineering, optimizing the shapes of aircraft wings or windmill blades to achieve desired aerodynamic traits like lift and drag forces is crucial \citep{martins2022aerodynamic}.
The common thread in these fields is the \emph{design cycle}: experts start with an initial design and aim to improve a specific property. Guided by intuition and expertise, they make adjustments and evaluate the changes. If the property improves, they continue this iterative optimization process. This cycle is repeated multiple times, making it time-consuming and resource-intensive.
Machine learning (ML) holds a promise to reduce these costs, speed up design cycles, and create better-performing designs \citep{balachandran2018experimental, li2022machine, vamathevan2019applications, jiang2023survey}.

Yet, progress in ML methods for design is hindered by practical challenges. The first challenge is \textit{limited data availability}. Since gathering label measurements is resource intensive \citep{van2024deep, kurczab2014influence, yao2023machine, hoffmann2019machine}, designers are more often than not constrained to very small-scale datasets. In addition, there are often \textit{non-smooth functional dependencies  between the features and outcome}, complicating approximation, even with for deep neural networks \citep{kwapien2022implications, van2022exposing}.
Traditional methods use two part frameworks, requiring discriminators to guide the property enhancement for examples produced by a generative model. Such discriminators should be able to reliably predict the property of interest given some training data or its latent representations. Because of the dependency on training a discriminator for guidance, we denote these methods as \emph{explicit guidance}. While Genetic Algorithms were once prevalent \cite{gen1999genetic}, contemporary models like auto-encoders \citep{wang2021flow}, GANs \citep{wang2023airfoil}, and diffusion models now dominate both research and practice \citep{li2022machine} the role of  generative models.
Despite their flexibility, such models face challenges typical to deep learning: they are ``data-hungry'' and unreliable when encountering out-of-distribution examples \citep{kusner2017grammar, janz2017actively, brookes2019conditioning, notin2021improving}. 
%Traditional methods like Genetic Algorithms (GA) and recent ML-based approaches with discriminator guidance have made strides in addressing these issues. 
%Nonetheless, they still struggle with the above limitations, especially in scenarios with sparse data, where accurate predictions are particularly challenging. 
\begin{figure}[t]
    \centering
    \includegraphics[width=0.85\textwidth]{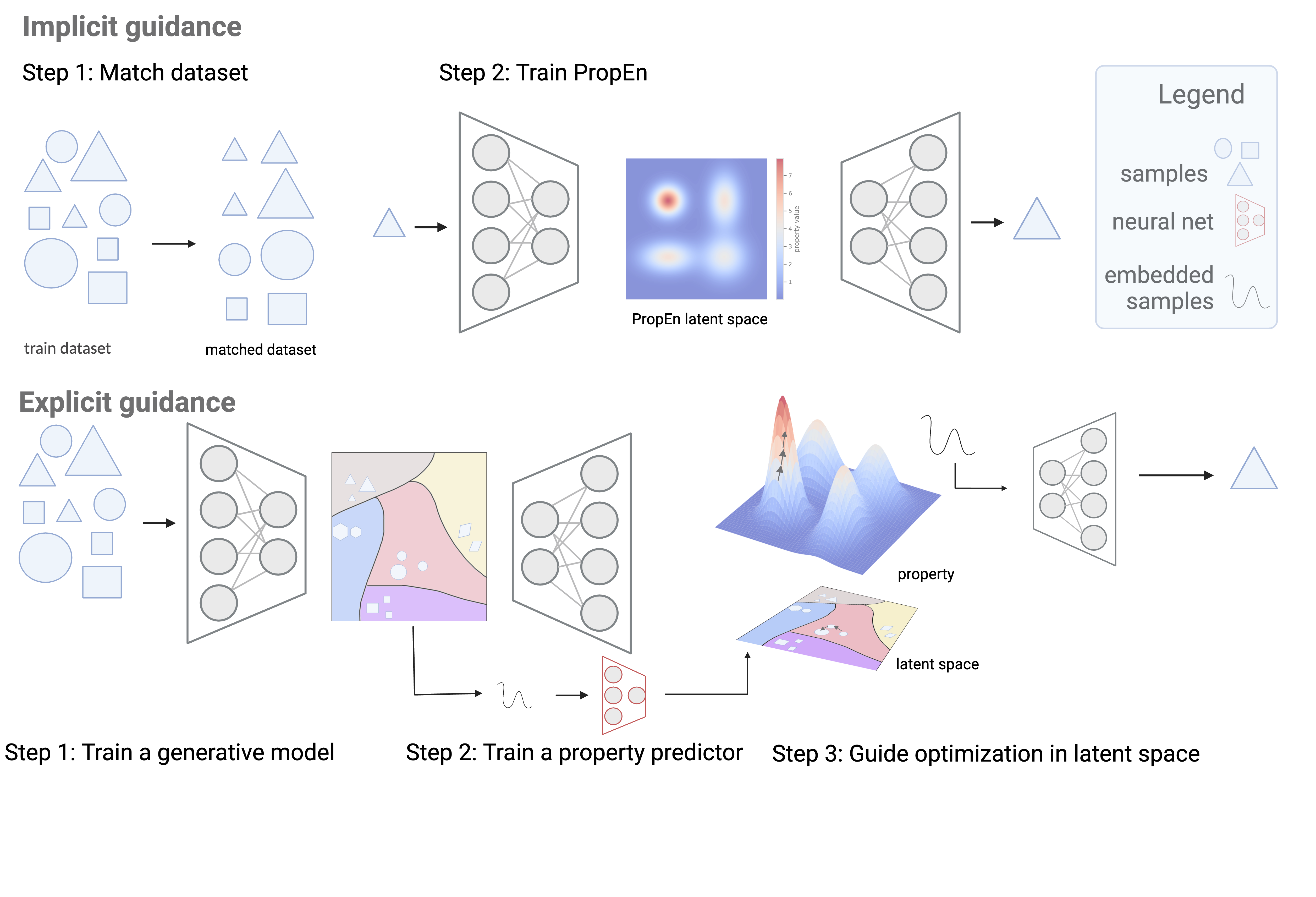}
    %\vspace{-0.9cm}   
    \caption{Conceptual summary of implicit and explicit guidance. The task is increasing the size of the objects. Top - in implicit guidance, first we match the training dataset, by pairing each sample with the closest one w.r.t. its shape, which has a better property (size). Then, we train a encoder-decoder framework which due to the construction of the dataset learns a lower dimensional manifold where the embeddings are ordered by the property value. Bottom - in explicit guidance, we train two separate models: a generator and a discriminator that guides the optimization in latent space.}
%\vspace{-0.5cm}    
\end{figure}\label{fig:explicit_vs_implicit}

Motivated by these challenges, we propose a new approach inspired by the concept of ``matching''. 
Matching techniques in econometrics are used to address the challenge of selection bias and confounding when estimating causal effects in observational studies \citep{althauser1970computerized, rubin1973matching, rubin2007design, stuart2010matching}. These techniques aim to create comparable groups of units by matching treated and control observations based on observable characteristics. %Let \( X \) denote the vector of observable characteristics, \( T \) denote the treatment indicator variable (where \( T = 1 \) if treated and \( T = 0 \) if untreated), and \( Y \) denote the outcome variable of interest.
The basic idea behind matching is to identify untreated units that are similar to treated units in terms of observed covariates, effectively creating a counterfactual comparison group for each treated unit. %Mathematically, this can be represented as finding untreated units \( j \) that minimize the distance or dissimilarity measure between their covariate vectors \( X_j \) and the covariate vector \( X_i \) of the treated unit \( i \), such that \( T_j = 0 \) and \( T_i = 1 \).
%Matching methods can vary in complexity, from simple nearest neighbor matching to more sophisticated methods such as propensity score matching and kernel matching \citep{stuart2010matching}. 
Matching techniques in ML as in econometrics have only be used provide more robust, reliable causal-effect estimation \citep{kallus2020deepmatch, wang2021flame, clivio2022neural}. %In this work, we will argue that %to the best of our knowledge, we are the first one to introduce 
%matching can be used to perform design with implicit guidance. % (controlled) generation.

%We propose a novel approach inspired by the concept of ``matching''. 
This work argues that, in lack of large datasets, matching allows for implicit guidance, completely sidestepping the need for training a discriminator (differentiable surrogate model). We match each sample with a similar one that has a superior value for the property of interest. By doing so, we obtain a much larger training dataset, inherently embedding the direction of property enhancement. We name our method \emph{PropEn} and we illustrate it in \autoref{fig:explicit_vs_implicit}. By leveraging this expanded dataset within a standard encoder-decoder framework, we circumvent the need for a separate discriminator model. We show that PropEn is domain agnostic, can be applied to any data modality continuous or discrete. Additionally, Propen alleviates some common problems with explicit guidance such as falling off the manifold or complex engineering due to joint training of multiple models.

Overall, our contributions are as follows:
\begin{itemize}
\item We propose ``matching'' (inspired by causal effect estimation) to expand small training datasets (\autoref{sec:matching});
\item We provide a theoretical analysis on how training on a matched dataset implicitly learns an approximation of the gradient for a property of interest (\autoref{sec:gradient});
\item We provide guarantees that the proposed designs are as likely as the training distribution, avoiding common pitfalls where unreliable discriminators lead to unrealistic, pathological designs (\autoref{sec:sampling});
\item We demonstrate the effectiveness and advantages of implicit guidance through extensive experiments in both toy and real-world scientific problems, using both numerical and wet lab validation
(\autoref{sec:experiments}). 
\end{itemize}

\section{Property Enhancer (PropEn)}
\label{sec:igs}

% Recalling our target applications in design optimization, 
% Our objective is: 
Given a set of initial examples, our objective is to propose a new design which is similar to the initial set, but, exceeds it in some property value of interest.

\textbf{Problem setup.}
Concretely, we start with a dataset $\mathcal{D} = \lbrace x_i\rbrace_{i=1}^n $ consisting of $n$ observed examples \( x_i \in \mathbb{R}^m \) drawn from a distribution \( p \) together with their corresponding properties $y_i = g(x_i) \in \mathbb{R}$. 
Our objective is to determine ways to improve the property of a test example. Concretely, at test time, we are given a point $x_0$ and aim to identify some new point $x^{\textit{new}}\sim p$ close to it such that \( g(x^{\textit{new}}) > g(x_0) \). In effect, our problem combines constrained optimization (maximize $g(x^{\textit{new}})$ while staying close to $x_0$) with sampling from a distribution (point $x^{\textit{new}}$ should be likely according to $p$).

%The current section commences by describing PropEn. We then proceed to theoretically characterize the properties that render our method suitable for solving the considered design problem. The section concludes with a simple illustration of the presented ideas. 

%We propose an effective solution to address the primary challenges encountered in design optimization: the introduction of ``matched datasets.'' Our approach draws some inspiration from the theory of causation, particularly the potential outcomes framework. The fundamental challenge of causality lies in the inability to observe multiple outcomes simultaneously; we are restricted to observing only one reality at a time, without access to its counterfactuals. One method to mitigate this limitation is by acquiring a structural causal model, which delineates the functional dependencies among all variables within a system, enabling the simulation of various scenarios. However, obtaining such a fully described system remains a lofty aspiration, especially within the life sciences domain. As an alternative, practitioners often resort to ``matching,'' wherein each data point in the observed dataset seeks its closest similar counterpart, differing only in the value of the label. This counterpart, in essence, serves as a proxy for a counterfactual scenario.

%\subsection{Proposed method}
Hereafter, we will refer to the initial example we wish to optimize as \emph{seed} design and the model's proposal as \emph{candidate} design. Our method, \emph{PropEn}, entails three steps: (i) matching a dataset, (ii) approximating the gradient by training a model with a matched reconstruction objective and (iii) sampling with implicit guidance.

\subsection{Match the dataset}\label{sec:matching}
We view the group of samples with superior property values as the treated group and their lower value counter part as the control group. This motivates us to construct a ``matched dataset'' for every \( (x, y) \) within \( \mathcal{D} \):
\begin{align}
\mathcal{M} = 
\left\{ (x, x') \ \middle\vert \begin{array}{l}
     x, x' \in \mathcal{D} \\
     %\|x' -  x \|^2 \leq \Delta_x, \ g(x') \in [g(x) + \delta_y, g(x) + \Delta_y] 
     \|x' -  x \|^2 \leq \Delta_x, \ g(x') - g(x) \in (0, \Delta_y] 
  \end{array}\right\},
\end{align}
% 
%where $\Delta_x$ and $ \delta_y<\Delta_y$ are predefined positive thresholds.
where $\Delta_x$ and $\Delta_y$ are predefined positive thresholds.

%, and $d$ is an appropriate distance metric for the data in question. %which can be real data vectors, 3d structure coordinates, image pixel representations, tokenized sequence molecule representation etc.
% 

The matched dataset gives us a new and extended collection $\mathcal{M}$ % = \left\{ (x, x') | g(x) < g(x')\right\}^N$, 
whose size $N = O(n^2) \gg n$
can significantly exceed that of the training set, depending on the choice of matching thresholds. 

\subsection{Approximate the gradient}\label{sec:gradient}
Once a dataset has been matched, we train a deep encoder-decoder network $f_\theta$ over $\mathcal{M}$ by minimizing the \emph{matched reconstruction} objective:
\begin{align}
    \ell (f_\theta; \mathcal{M}) = \frac{1}{|\mathcal{M}|} %\sum_{(x, x') \in \mathcal{M}}  \ell(f_\theta (x), x') + \beta \, \ell(x, f_\theta(x)),    \tag{matched reconstruction objective}
    \sum_{(x, x') \in \mathcal{M}}   %\ell(x, f_\theta(x)) + 
    \ell(f_\theta (x), x'),    \tag{matched reconstruction objective}
\label{eq:loss}    
\end{align}
where $\ell$ is an appropriate loss for the data in question, such as an mean-squared error (MSE) or cross-entropy loss. 
%The two terms in the paired loss can be intuitively understood as follows: having a reconstruction term for the original design $x$ controls for similarity to the desired example; having the reconstruction of the better design guides changes in the design that lead to enhanced properties. 
%and $\beta\geq 0$ is a regularizer. 

Before illustrating the properties of our method empirically, we perform a theoretical analysis. We show that minimizing the matched reconstruction objective yields a model that approximates the direction of the gradient of $g(\cdot)$, even if no property predictor has been explicitly trained:

%\begin{equation}
%    \argmin_\theta  \frac{1}{|\mathcal{M}|} \sum_{x \to x' \in \mathcal{M}}  \ell(f_\theta([x ;y]), [x' ;y']).    
%\end{equation}
%\label{eq:train_loss1}
% 

%At test time, we feed to the trained model a new example $x_0$ (the seed) and read out the optimized variant $x_1$ from the it's output. We then proceed to iteratively re-feed the current variant to the model until convergence at a stationary point $f_\theta(x_t) = x_t$. In practice, we stop the optimization when the distance $\| f_\theta(x_t) - x_t \|_2$ between the model's input and output is below a certain small threshold.

\begin{Theorem} Let $f^*$ be the optimal solution of the ~\ref{eq:loss} with a sufficiently small $\Delta_x$. For any point $x$ in the matched dataset for which $p$ is uniform within a ball of radius $\Delta_x$, we have $f^*(x) \rightarrow c \nabla g(x)$ for some positive constant $c$.
\label{thm:thm_1}
\end{Theorem}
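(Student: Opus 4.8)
The plan is to work with the population version of the matched reconstruction objective and to exploit the fact that, for a squared-error reconstruction loss, its unconstrained minimizer is a conditional expectation. First I would write the expected loss at a fixed seed $x$ as $\mathbb{E}\!\left[\|f_\theta(x) - x'\|^2 \,\middle|\, (x,x') \in \mathcal{M}\right]$, where the conditioning encodes the matching constraints $\|x'-x\|^2 \le \Delta_x$ and $g(x')-g(x) \in (0,\Delta_y]$. Minimizing this pointwise over all measurable $f_\theta$ gives the optimal reconstruction $f^*(x) = \mathbb{E}[x' \mid (x,x') \in \mathcal{M}]$, i.e.\ the centroid of the matched partners of $x$ under $p$ restricted to the matching region. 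The quantity the theorem tracks is then the reconstruction displacement $f^*(x) - x = \mathbb{E}[x'-x \mid (x,x')\in\mathcal{M}]$, and the goal reduces to showing this average displacement points along $\nabla g(x)$.

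Next I would make the matching region explicit and linearize. The set of admissible partners is $B(x) = \{z : \|z-x\|^2 \le \Delta_x,\ 0 < g(z)-g(x) \le \Delta_y\}$. A Taylor expansion of $g$ about $x$ gives $g(z)-g(x) = \nabla g(x)^\top (z-x) + O(\|z-x\|^2)$, so on a ball of radius $\sqrt{\Delta_x}$ the quadratic remainder is $O(\Delta_x)$. As $\Delta_x \to 0$ the upper constraint $g(z)-g(x)\le \Delta_y$ becomes inactive, and the sign constraint $g(z)-g(x)>0$ converges to the half-space condition $\nabla g(x)^\top(z-x) > 0$. Hence $B(x)$ tends to the half-ball $H = \{z : \|z-x\| \le \sqrt{\Delta_x},\ \nabla g(x)^\top(z-x) > 0\}$.

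Then I would invoke the local uniformity hypothesis: since $p$ is uniform in the ball of radius $\Delta_x$ around $x$, the conditional law of $x'$ over $H$ is uniform, and by the rotational symmetry of $H$ about the axis $\nabla g(x)$ its centroid lies exactly on that axis. Concretely $\mathbb{E}[z-x \mid z \in H] = c\, \nabla g(x)/\|\nabla g(x)\|$ with $c>0$ equal to the known centroid offset of a half-ball (a positive dimension- and radius-dependent constant). Absorbing the normalization into the constant yields $f^*(x)-x \to c\,\nabla g(x)$, which is the claimed alignment.

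The main obstacle, and where I would spend most of the effort, is controlling the error incurred by replacing the curved region $B(x)$ with the half-ball $H$. The true boundary $\{g(z)=g(x)\}$ is a level set of $g$, which differs from the tangent hyperplane by an $O(\|z-x\|^2)$ bending; I would need to show that this thin curved sliver contributes only a higher-order correction to the mean displacement, so that the leading-order direction is unbiased and the ratio $(f^*(x)-x)/\|f^*(x)-x\|$ converges to $\nabla g(x)/\|\nabla g(x)\|$ as $\Delta_x \to 0$. A secondary point to address is that $f^*$ is defined as a minimizer over a restricted network class rather than over all measurable maps; I would treat the conditional mean as the population optimum and note that the stated convergence holds for that idealized optimum, or equivalently assume the class is expressive enough to realize it locally.
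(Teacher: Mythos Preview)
Your proposal is correct and takes essentially the same approach as the paper's proof: identify the MSE minimizer as the conditional mean of the matched partners, Taylor-expand the constraint $g(x')>g(x)$ to a half-space condition, and use the symmetry of the uniform half-ball to conclude its centroid lies along $\nabla g(x)$. The only cosmetic difference is that the paper makes the symmetry explicit by choosing a rotation $R_x$ that sends $\nabla g(x)$ to the first coordinate axis and then computes $\mathbb{E}[z_i\mid z_1>c]$ componentwise, whereas you phrase the same step geometrically; your flagged ``curved-vs-flat boundary'' obstacle is exactly the $O(\Delta_x^2)$ Taylor remainder that the paper absorbs (somewhat tersely) into the threshold constant.
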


%\paragraph{Proof sketch}
The detailed proof is provided in \autoref{app:proof_thm1}.

\begin{Remark}
The proof of \autoref{thm:thm_1} is founded on the assumption that distribution is uniformly distributed within a ball of radius 
$\Delta_x$ around point $x$.
%that is all examples in the neighbourhood of $x$ with higher value for the property, have equal probability to be a match. 
This assumption is made to maintain the generality of the theorem without specific information about the sampling distribution, assuming uniformity avoids introducing any biases that could arise from other distributional assumptions, such as symmetry, finite variance etc.
%This ensures that our theorem applies broadly and is not limited by any particular sampling distribution.
%Additionally, it is possible to verify this empirically as we show in Appendix.
\end{Remark}

\begin{Remark}
    It can be shown that with with a matched reconstruction objective we learn a direction that is $a$-colinear with the gradient of $g$, avoiding the isotropy assumption. This leads to additional analysis on understanding the implications of the choices while matching, all included in \autoref{sec:more_proofs}.
\end{Remark}

\subsection{Optimize designs with implicit guidance}\label{sec:sampling}

Training on a matched dataset allows for auto-regressive sampling. Starting with a design seed $x_0$, for $t=1, 2, \dots$, we can generate $x_t=f_\theta(x_{t-1})$ until convergence, $f_\theta(x_t) = x_t$, s.t. $g(x_t) > g(x_{t-1})$.
At test time, we feed a seed design $x_0$ to PropEn, and read out an optimized design $x_1$ from the its output. 
We then proceed to iteratively re-feed the current design to PropEn until  $f_\theta(x_t) = x_t$, which is analogous to arriving at a stationary point with $\nabla g(x_t)=0$ and we have exhausted the direction of property enhancement given the training data. Exploiting the implicit guidance from matching results in a trajectory of multiple optimized candidate designs.

We next show that optimized samples are almost as likely as our training set according to the data distribution $p$. This serves as a guarantee that the generated designs lie within distribution, as desired:
\begin{Theorem}
Consider a model $f^*$ trained to minimize the~\ref{eq:loss}.
The probability of $f^*(x)$ is at least
\begin{align*}
     p(f^*(x)) \geq \mathbb{E}_{x' \sim \hat{\mu}_x }[p(x')]  - \frac{ \| H_p(f(x))\|_2 \, \sigma^2(\mathcal{M}_x)}{2},
\end{align*}
where $\hat{\mu}_x$ is the empirical measure on the dataset, $H_p(x)$ is the Hessian of $p$ at $x$ and $\sigma^2(\mathcal{M}_x) = \mathbb{E}_{x' \sim \hat{\mu}_x }[\| x' - \mathbb{E}_{x'' \sim \hat{\mu}_x}[x''] \|_2^2]$ is the variance induced by the matching process. 
\end{Theorem}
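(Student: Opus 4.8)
The plan is to exploit the closed form of the minimizer of the matched reconstruction objective. When $\ell$ is the squared error, the pointwise population minimizer at a fixed input $x$ is the conditional mean of the matched targets, $f^*(x) = \mathbb{E}_{x' \sim \hat{\mu}_x}[x']$, where $\hat{\mu}_x$ is the empirical distribution over $\mathcal{M}_x = \{x' : (x,x') \in \mathcal{M}\}$. Writing $\bar{x} := f^*(x)$, the quantity to bound from below is $p(\bar{x})$, the density at the centroid of the matched targets, while the comparison quantity $\mathbb{E}_{x' \sim \hat{\mu}_x}[p(x')]$ is the average density over those same targets. So the whole statement is really a controlled Jensen-type comparison between $p$ evaluated at a mean and the mean of $p$.

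First I would establish this minimizer characterization, then compare $p(\bar{x})$ to $\mathbb{E}[p(x')]$ through a second-order Taylor expansion of $p$ about $\bar{x}$. For each $x'$ in the support of $\hat{\mu}_x$, expand $p(x') = p(\bar{x}) + \nabla p(\bar{x})^\top (x'-\bar{x}) + \frac{1}{2}(x'-\bar{x})^\top H_p(\bar{x})(x'-\bar{x}) + r(x')$, with $r$ the higher-order remainder. Taking expectation over $x' \sim \hat{\mu}_x$, the key simplification is that the linear term vanishes identically: since $\bar{x}$ is by construction the mean of $x'$, we have $\mathbb{E}[x'-\bar{x}] = 0$ and hence $\mathbb{E}[\nabla p(\bar{x})^\top(x'-\bar{x})] = 0$. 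This leaves $\mathbb{E}[p(x')] - p(\bar{x})$ equal to the expected quadratic form plus the expected remainder.

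Next I would bound the quadratic term. Using the operator-norm inequality $|v^\top H_p(\bar{x}) v| \leq \|H_p(\bar{x})\|_2 \|v\|_2^2$ with $v = x'-\bar{x}$, together with $\mathbb{E}_{x'\sim\hat{\mu}_x}[\|x'-\bar{x}\|_2^2] = \sigma^2(\mathcal{M}_x)$ by definition, the expected quadratic form is bounded in magnitude by $\frac{1}{2}\|H_p(\bar{x})\|_2\,\sigma^2(\mathcal{M}_x)$. Rearranging the Taylor identity yields $p(\bar{x}) \geq \mathbb{E}[p(x')] - \frac{1}{2}\|H_p(\bar{x})\|_2\,\sigma^2(\mathcal{M}_x) - \mathbb{E}[r(x')]$, and substituting $\bar{x} = f^*(x)$ recovers the claimed inequality once the remainder is dropped.

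The hard part will be controlling $\mathbb{E}[r(x')]$ so the statement is a genuine lower bound rather than a local approximation. The cleanest rigorous route is the exact mean-value (Lagrange) form of Taylor's theorem, which replaces $H_p(\bar{x})$ by $H_p(\xi_{x'})$ for some $\xi_{x'}$ on the segment $[\bar{x}, x']$, making the quadratic term exact with no remainder; one then bounds $\|H_p(\xi_{x'})\|_2$ by its supremum over the ball of radius $\Delta_x$ containing $\mathcal{M}_x$. Because $\Delta_x$ is taken small (the same regime as in \autoref{thm:thm_1}), this supremum is well-approximated by $\|H_p(f^*(x))\|_2$, which justifies evaluating the Hessian at the model output in the displayed bound. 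I would therefore present the exact Lagrange-remainder version as the rigorous inequality and note that the stated form follows in the small-$\Delta_x$ regime where $H_p$ is essentially constant across $\mathcal{M}_x$.
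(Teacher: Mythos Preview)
Your proposal is correct and follows essentially the same route as the paper: characterize $f^*(x)$ as the empirical mean of the matched targets under MSE, Taylor-expand $p$ about $\bar{x}=f^*(x)$, take expectations so the linear term vanishes, and bound the quadratic term via $\|H_p(\bar x)\|_2\,\sigma^2(\mathcal{M}_x)$. Your discussion of the Lagrange-form remainder is in fact more careful than the paper's proof, which directly writes the second-order upper bound with $H_p$ evaluated at $f^*(x)$ without explicitly justifying the remainder control.
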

The detailed proof is provided in \autoref{app:proof_thm2}.

\begin{figure}[t!]
  \centering
    \includegraphics[width=0.9\textwidth]{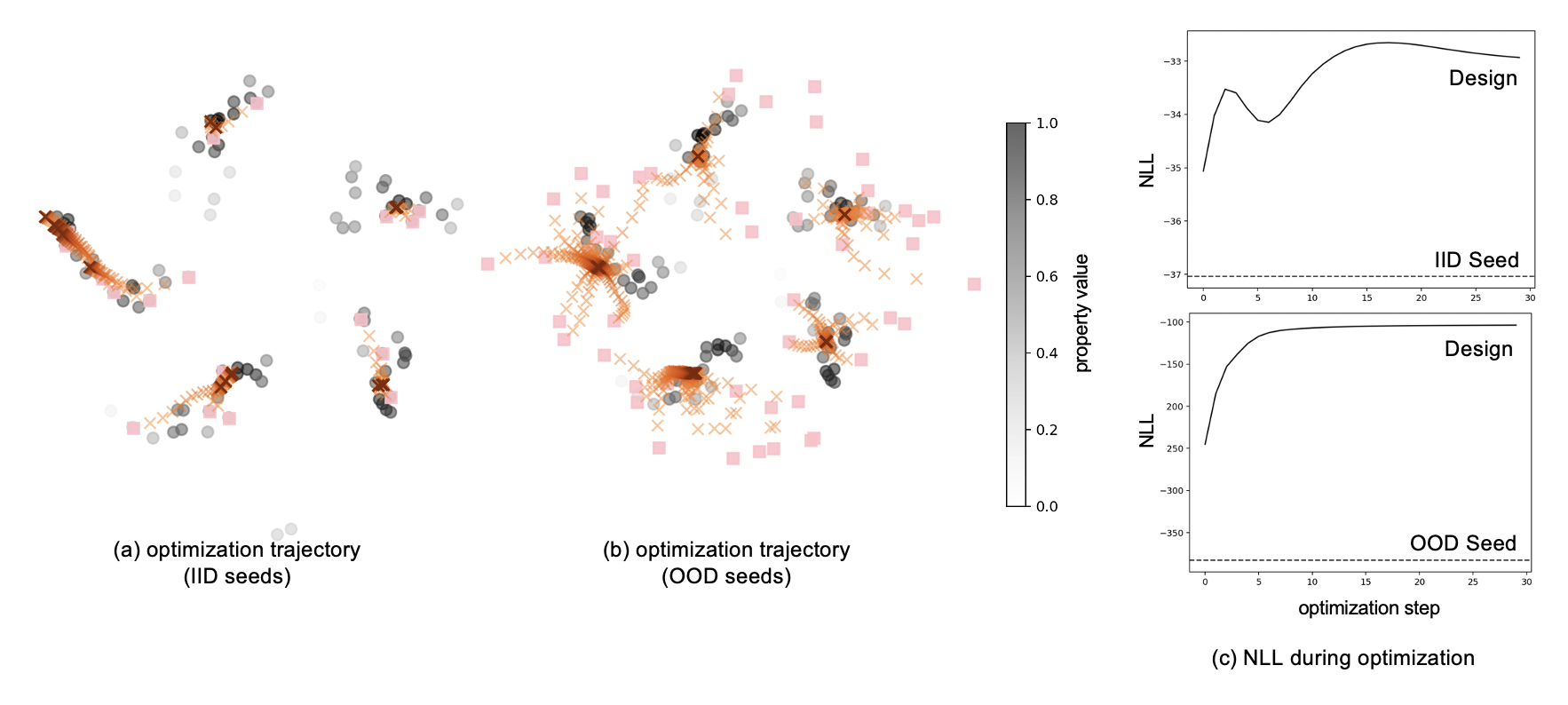}
  \caption{Illustration of PropEn on the pinwheel toy example with only 72 training examples. The training data are circles in grey, colored by the value of the property. With pink we mark the initial hold out test points and in orange `$\times$' the PropEn trajectories. The color of the candidates intensifies with each iteration step. On the right-hand-side, we depict the sum of negative log likelihoods of the seeds and optimized designs across optimization steps.}
  \label{fig:example_cc}
\end{figure}  

We use a synthetic example to illustrate optimizing designs with PropEn. We choose the well-known 2d pinwheel dataset. As a property to optimize, we choose the log-likelihood of the data as estimated by a KDE with Gaussian kernel with $\sigma=0.01$. \autoref{fig:example_cc} depicts in gray the training points, with the color intensity representing the value of the property---hence a higher/darker value is better in this example. After training PropEn, we take held out points (pink squares) and use them as seed designs. With orange x-markers, we illustrate PropEn candidates, with the color intensity increasing at each step $t$. We notice that PropEn moves towards the regions of the training data with highest property value, consistently improving at each step (right-most panel). Additionally, we also use out-of-distribution seeds, and we demonstrate in the middle panel that PropEn chooses to optimize them by proposing designs from the closest regions in the training data. %Interestingly, in Panel C we show what happens to OOD examples when optimizing with explicit guidance, i.e. autoencoder paired with property predictor trained on the latent space. The explicit guidance in latent space is unable to find the correct direction for improving the properties and pushes the designs outside of the training space. More comperhensive results with ablation studies will follow in \autoref{sec:experiments}.

\begin{table}[t!]
    \centering
    \caption{Overview of the datasets in experiments.}
    \resizebox{\textwidth}{!}{
% \begin{adjustbox}{max width=0.9\textwidth}
\begin{tabular}{m{1.8cm}m{2cm}m{2.0cm}m{1.3cm}m{1.7cm}m{1.6cm} m{2.4cm}}
\toprule
\textbf{Dataset} & \textbf{Domain} & \textbf{Size $n$} & \textbf{Type} & \textbf{Metric} & \textbf{Property}  & \textbf{Preview} \\
\midrule
\textbf{Toy} & $\mathbb{R}^{10}, \mathbb{R}^{50}, \mathbb{R}^{100}$ & $50, 100$ & cont. & L2 &  log-likelihood & \includegraphics[width=2cm, height=1cm]{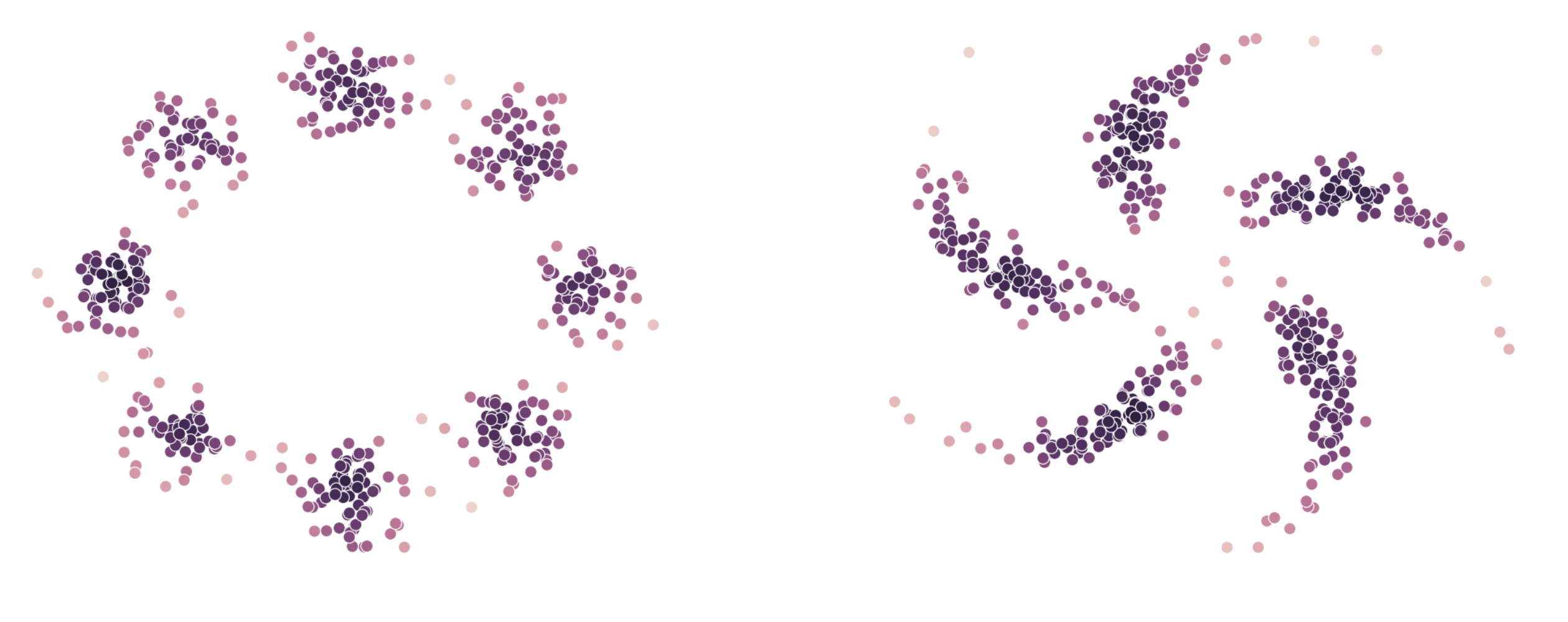}\\
\textbf{Airfoil} & $\mathbb{R}^{400}$ & $200, 500$ & cont. &  L2  &  lift-to-drag ratio & \includegraphics[width=2cm, height=1cm]{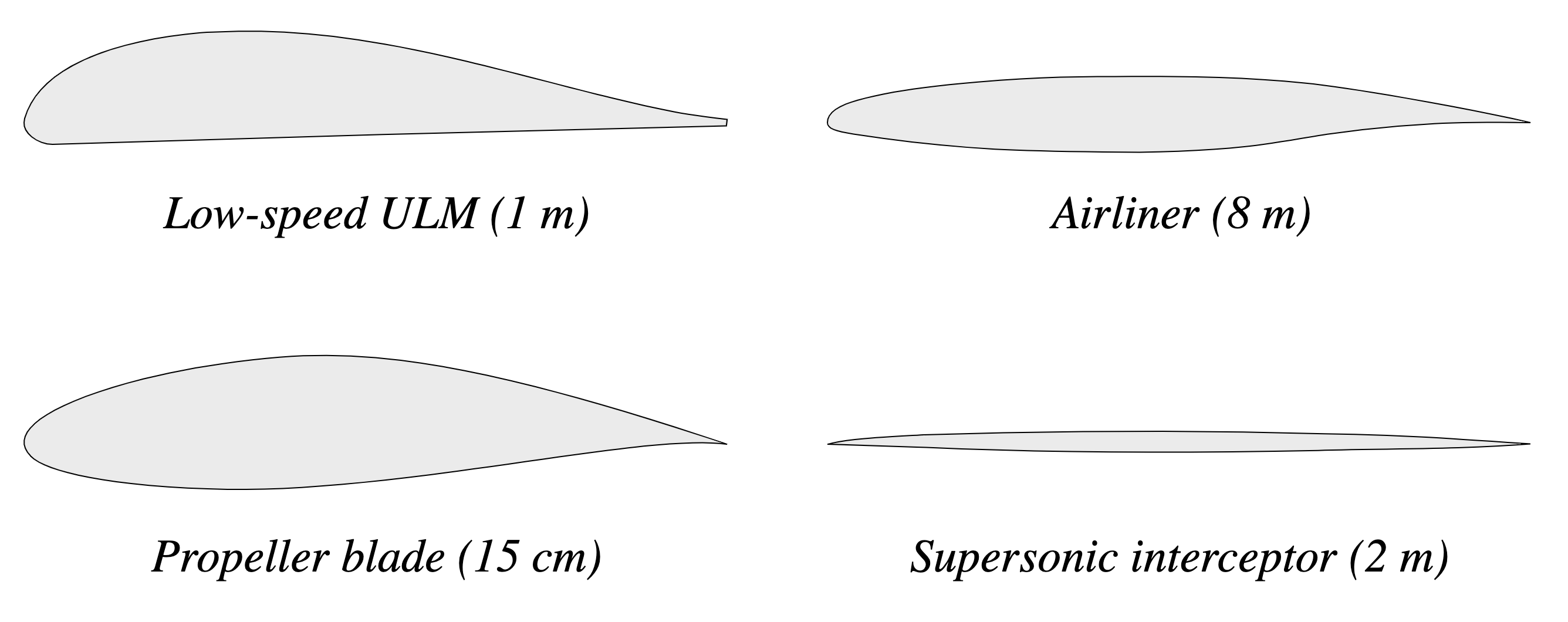}\\
\textbf{Antibodies} & $20^{297}$ & $200-400$ & discrete & Levenshtein &  binding affinity & \includegraphics[width=2cm, height=1cm]{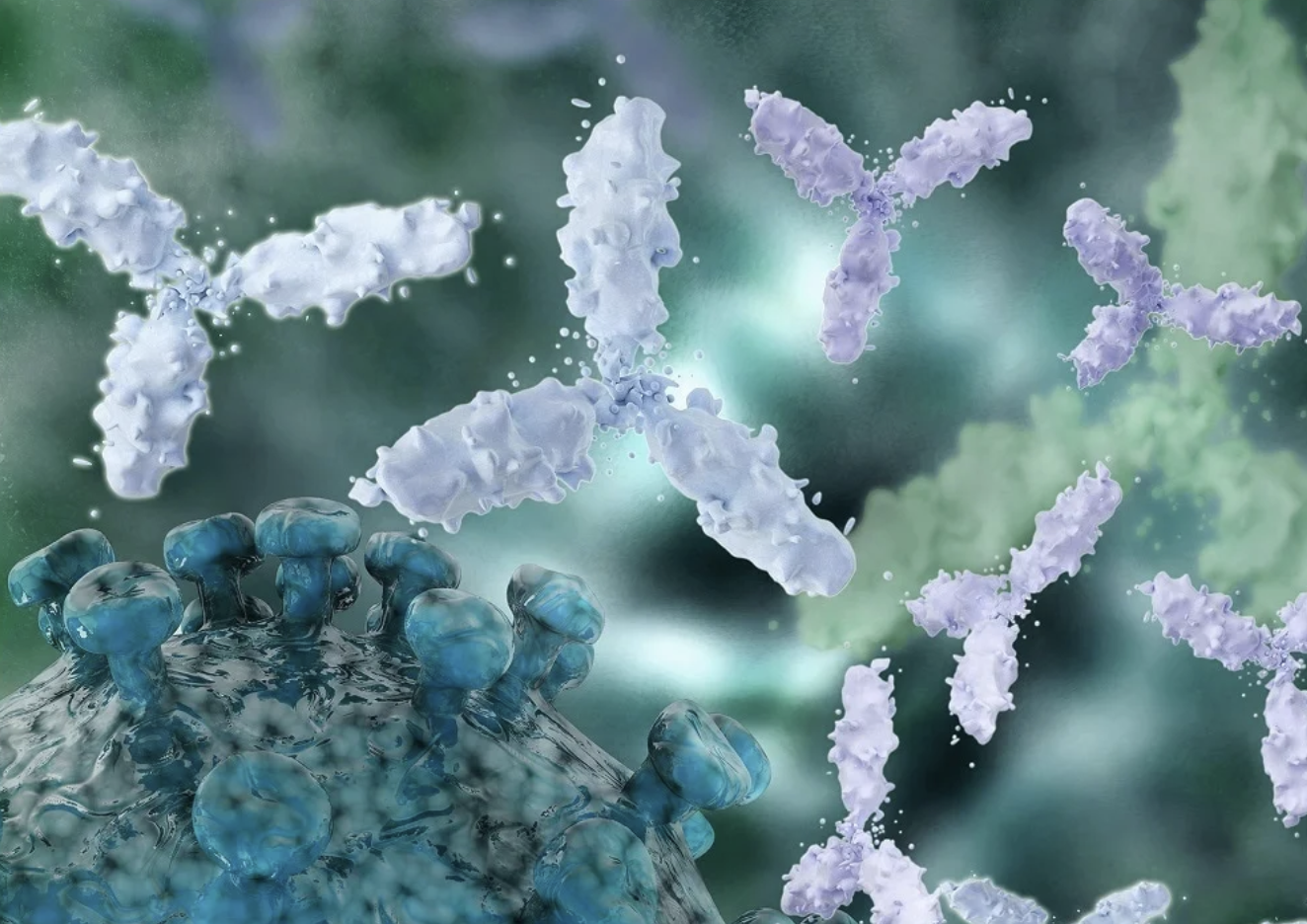}\\
\bottomrule
\end{tabular}
% \end{adjustbox}
}
    \label{tab:datasets}
\end{table}
\section{Experimental Results}
\label{sec:experiments}

We empirically evaluate PropEn on synthetic and real-world experiments to answer the following main questions:
(i) Can PropEn be applied across various domains and datasets?
(ii) Does PropEn provide reliable guidance, especially in situations with limited data and when dealing with out-of-distribution examples?
(iii) How effective is PropEn in recommending optimal designs? Can it suggest candidates with property values higher than those encountered during training?
(iv) How does PropEn's performance vary with different data characteristics (e.g., dimensionality, sample size, heterogeneity) and hyperparameters (such as $\Delta_x$, $\Delta_y$, and regularization terms)? %Our code is available at \url{https://anonymous.4open.science/r/propen-E12C/}{}.
%\end{enumerate}

\textbf{Datasets.}
We consider three different data types: synthetic 2d toy datasets and their higher dimension transformations, NACA airfoil samples, and therapeutic antibody proteins. An overview of the data is given in \autoref{tab:datasets}. 
We present our results in two settings, \textit{in silico} where we rely on experimental validation using computer simulations and solvers, and \textit{in vitro} experiments where candidate designs were tested in a wet lab. Each of the experiments is evaluated under the baselines and metrics suitable for the domain.

%of common multimodal and nonlinear shapes - swissroll, pinwheel and 8gaussians. 

%\textbf{Metrics.}
%To assess the property improvement, we employ the metric \emph{Average Improvement (AI)}, which represents the improvement in optimizated candidates exhibiting. Additionally, we present the \emph{Ratio of Improvement}, the proportion of holdout samples for which PropEn or baselines demonstrate enhanced property values. To assess the quality of the generated samples we report \emph{uniqueness} and \emph{novelty} in tables.
%In the case of synthetic data, we utilize a likelihood model derived from a Kernel Density Estimation (KDE) fit on the training data. The negative log-likelihood scores under this model serve as an indicator of in-distribution performance.
%Higher values for all metrics indicate better performance. 

\textbf{PropEn variants.}
We investigate the utilization of matching and reconstruction within the PropEn framework. Two key considerations emerge: first, whether to reconstruct solely the input features (x2x) or both the input features and the property (xy2xy); second, the proximity to the initial sample, regulated by incorporating a straightforward reconstruction regularizer into the training loss $\ell(f_\theta(x), x)$. This regularized variant will be referred to as \emph{mixup/mix}.

%\begin{itemize}
%    \item baselines: AE + guidance, propen x 4 (xx and xy, control for OOD), GA 
%    \item metrics: average improvement, ratio of improvement, log-likelihood when possible
%    \item datasets overview
%\end{itemize}

%\textbf{Baselines.}
%We empirically validate the four variants of PropEn and we compare against explicit guidance methods. In the toy example  we include an autoencoder of the same architecture as PropEn, augmented with a discriminator for guidance. In engineering applications, the standard technique for ML-based design optimization is also guided generation in latent space, hence we include similar baseline, AE with an MLP guidance. We denote these baselines \emph{Explicit}.
 
%In the antibody use-case however, although AE-based with guidance solutions have been explored, such as \citep{gligorijevicfunction} they did not lead to functional antibodies. Thus, for baselines in therapeutic protein design, we compare to more powerful frameworks based on diffusion \citep{gruver2023protein} and discrete walk-jump sampling \citep{frey2024protein}. 

\subsection{\textit{In silico} experiments}

\subsubsection{Toy data}

We choose two well-known multi-modal densities: pinwheel and 8-Gaussians. These are 2d datasets, but, in order to make the task more challenging, we expand the dimensionality to $d \in \{ 10, 50, 100 \}$ by randomly isometrically embedding the data within a higher dimensional space. Our findings are summarized in \autoref{fig:toy_results} and we include the tabular results in \autoref{sec:app_toy_tab}. We empirically validate the four variants of PropEn and we compare against explicit guidance method: for consistency, we chose an auto-encoder of the same architecture as PropEn augmented with a discriminator for guidance in the latent space. We denote this baseline \emph{Explicit}. We compare the methods by ratio of improvement , the proportion of holdout samples for which PropEn or baselines demonstrate enhanced property values. To assess the quality of the generated samples we report \emph{uniqueness} and \emph{novelty} in tables. We use a likelihood model derived from a Kernel Density Estimation (KDE) fit on the training data. The negative log-likelihood scores under this model serve as an indicator of in-distribution performance. Higher values for all metrics indicate better performance.

%Discussion: Propen mixup > propen > AE. Choice of propen model can be guided by application (do we care more about best improvement or ratio of improved designs, or we want to stay in distribution). Propen works well in high dimensions  with small sample sizes. The number of steps in iterative optimization (re feeding into propen) can be seen as trade-off between uniqueness and improvement of a property. Diagnostics is available through the latent space visualisation, if it looks ordered, the model has converged.

\textbf{Results.}  Several insights can be gleaned from these experiments. When analyzing the results based on the number of samples, a clear trend emerges: as the number of training samples increases, PropEn consistently outperforms explicit guidance across all metrics, except for average improvement, where all methods exhibit similar behavior. The choice of the preferred metric may vary depending on the specific application; however, it is noteworthy that while explicit AE guidance improves approximately 50\% of the designs for all datasets, PropEn demonstrates the potential to enhance up to 85\% of the designs. Importantly, this improvement trend remains consistent regardless of the dimensionality of the data.
Furthermore, an intriguing observation pertains to the performance of different variations of PropEn. It is noted that as the sample size increases, PropEn xy2xy does not exhibit an advantage over PropEn x2x. Moreover, the impact of iterative sampling with PropEn is notable. With each step, the property improves until it reaches a plateau after multiple iterations, albeit with a simultaneous drop in the uniqueness of the solution to around 80\%. Nevertheless, iterative optimization can be continued until convergence, with all designs saved along the trajectory for later filtering according to user needs.
% Finally, an interesting implication arises from the choice of matching distance.
%As illustrated in Figure X, when the distances in both x and y are small, PropEn proposes designs that are ``safe'', closer to the training data (similar to explicit guidance). However, as the matching threshold increases, PropEn designs suggest more ``bold'', larger changes that may transition a design from one cluster to another, as seen in the case of 8-Gaussians. This provides practitioners with another level of control over how extensively they wish to explore the design space starting from their initial sample.

%TBA
%\begin{itemize}
%    \item Three toy datasets: swiss roll, 8gaussians, pinwheel. 
%    \item three extended dimensions: 10, 50, 100
%    \item two sample sizes 50, 100
%    \item give some statistics on number of pairs vs unique training datapoints
%    \item show the latent space colored by property
%\end{itemize}

\begin{figure}[t]
    \centering
    \begin{subfigure}{\textwidth}
        \centering
        \includegraphics[width=0.48\linewidth]{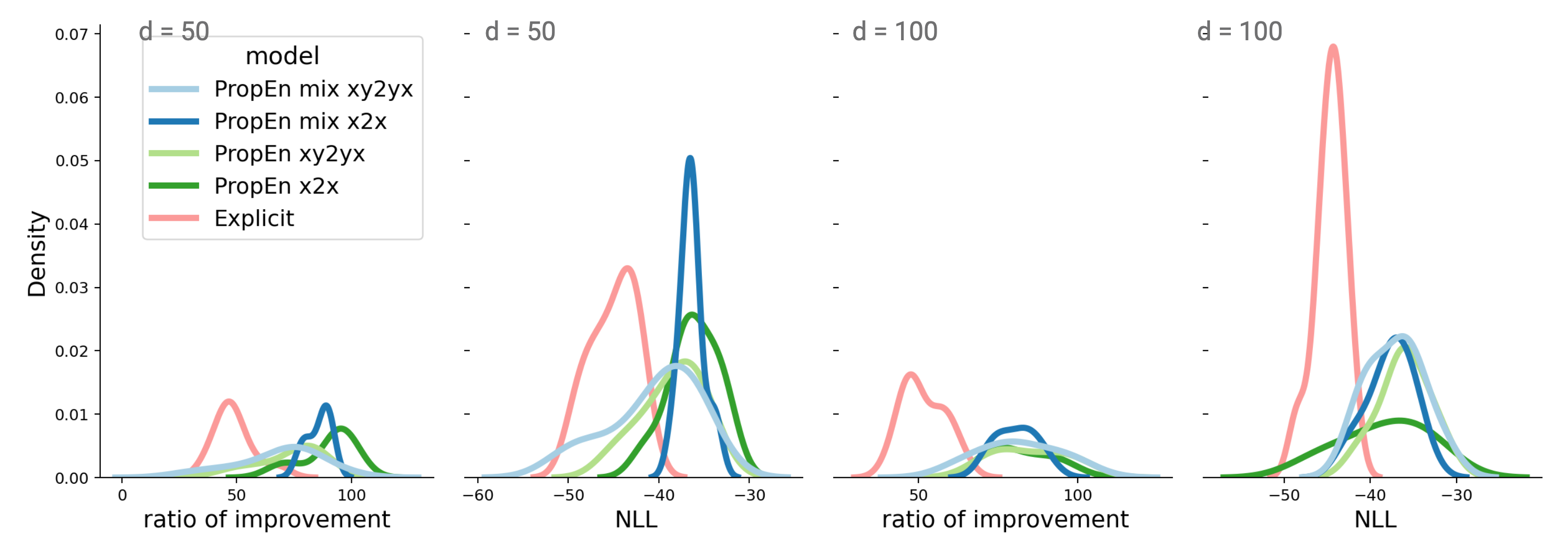}
        \includegraphics[width=0.48\linewidth]{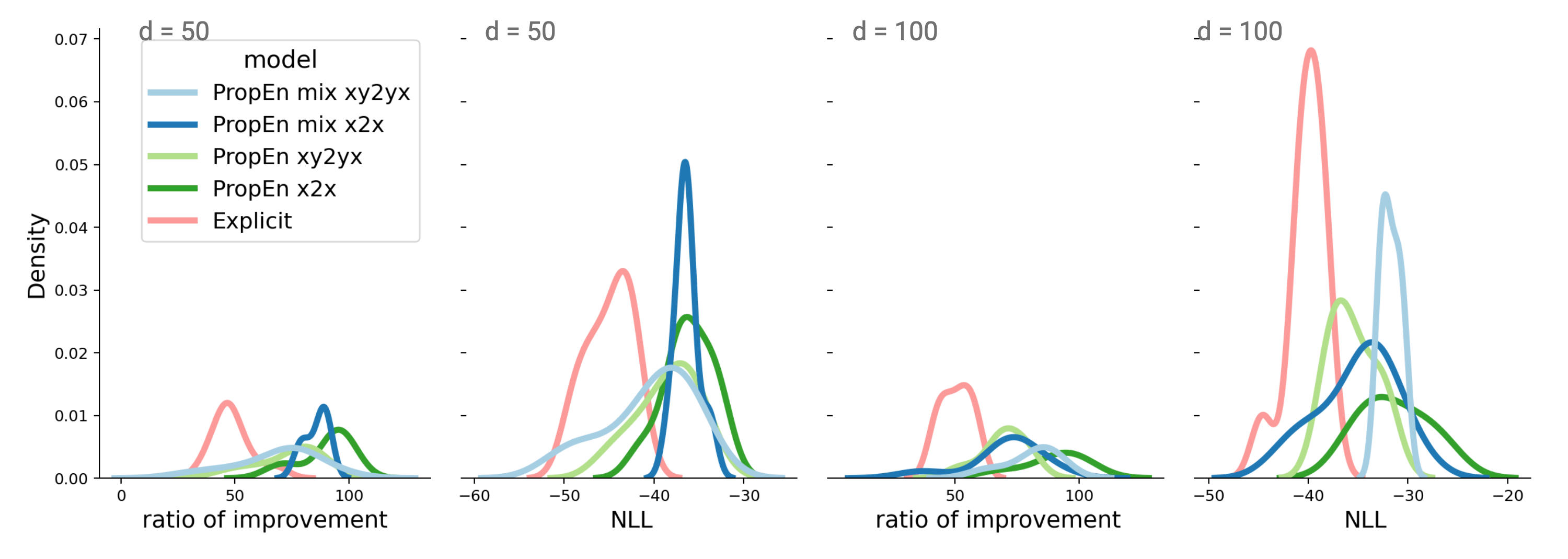}
        %\caption{8 gaussians}
        \label{fig:sub1}
    \end{subfigure}
    
    %\begin{subfigure}{0.9\textwidth}
     %   \centering
        %\includegraphics[width=\linewidth]{figures/checkerboard (1).png}
    %    \caption{Swissroll}
   %     \label{fig:sub3}
    %\end{subfigure}
    \caption{PropEn in toy examples in $d \in \{ 50, 100\}$, left side: 8-Gaussians, right side: pinwheel. Distribution of evaluation metrics from 10 repetitions of each experiment. }
    \label{fig:toy_results}
\end{figure}

\subsubsection{Engineering: airfoil optimization}

% \textbf{Context} 
In an engineering context, shape optimization entails altering the shape of an object to enhance its efficiency. NACA airfoils (National Advisory Committee for Aeronautics),
%\footnote{the predecessor of NASA})
rooted in aerodynamics and parameterized by numerical values, serve as a well-documented benchmark due to their versatility. These airfoils span diverse aerodynamic characteristics, from high lift to low drag, making them ideal for exploring different optimization objectives. Given their integral role in aircraft wings, optimizing airfoil shapes can significantly impact aerodynamic performance by improving lift, drag, and other properties essential for aerospace engineering.

\textbf{Data \& experimental design.} We generate NACA 4-digit series airfoil coordinates by choosing these parameters: $M$ (maximum camber percentage), $P$ (location of maximum camber percentage), and $T$ (maximum thickness percentage). Each airfoil is represented by 200 coordinates, resulting in a 400 vector representation when flattened.  Our objective is to optimize the lift-to-drag ratio ($C_l / C_d$ ratio) for each shape. We calculate lift and drag using NeuralFoil \citep{neuralfoil}, a precise deep-learning emulator of XFoil \citep{drela1989xfoil}. 
Note that the lift-to-drag ratio is pivotal in aircraft design, reflecting the wing's lift generation efficiency relative to drag production. A high value signifies superior lift production with minimal drag, translating to enhanced fuel efficiency, extended flight ranges, and overall improved performance. This ratio is paramount in aerodynamic design and optimization, facilitating aircraft to travel farther and more efficiently through the air. Traditionally, engineers have relied on genetic algorithms guided by Gaussian Process models (kriging) \citep{jiang2023survey}, however, in recent years the community has moved towards ML-based methods which consist of a generative model that can be GAN-based \citep{yi2023cooperation, wang2023airfoil} or a variation of a VAE 
\citep{kou2023aeroacoustic, yonekura2021generating, li2022machine, wang2021flow}. Similarly, for the surrogate, guidance model, GPs and numerical solvers have been replaced by deep models \citep{bouhlel2020scalable, neuralfoil}. For our experiments we follow this standard setup: we choose a VAE-like baseline as it is the most similar architectural choice to PropEn, and for guidance we use a MLP. All networks (encoder, decoder and surrogate) are fully connected 3 layer MLPs with 50 units, ReLU activations per layer and a latent space of dimension 50.

\begin{wraptable}[11]{r}{0.6\textwidth} 
\centering
\small
\caption{Average improvement (AI) and Rate of improvement (RI) of $C_l / C_d$ ratio in NACA airfoil design optimization. Mean and standard deviation from 10 repetitions.}
\resizebox{0.6\textwidth}{!}{%
\small
\begin{tabular}{@{}lllll@{}}
\toprule
 & \multicolumn{2}{l}{N=500} & \multicolumn{2}{l}{N=200} \\ \midrule
 & AI          & RI          & AI          & RI          \\ \midrule
 \textbf{Explicit guidance} &$84.02 \pm 18.30$ &$59.61 \pm 40.83$ 
&$15.04 \pm 1.21$ &$6.53 \pm 6.35$  \\
\textbf{PropEn mix x2x} &$78.25 \pm 10.71$ &$74.06 \pm 14.65$ 
&$29.49 \pm 8.02$ &$7.23 \pm 6.21$  \\
\textbf{PropEn x2x} &$92.71 \pm 4.88$ &$96.31 \pm 1.08$ 
&$41.30 \pm 14.83$ &\textbf{38.83 $\pm$ 8.80} \\
\textbf{PropEn mix xy2xy} &$66.04 \pm 40.86$ &$76.51 \pm 16.86$ 
&$5.91 \pm 8.72$ &$6.06 \pm 7.75$ \\
\textbf{PropEn xy2xy} &\textbf{99.58 $\pm$ 0.84} &\textbf{90.97 $\pm$ 10.68} 
&\textbf{55.41 $\pm$ 14.07} &$29.81 \pm 31.30$
\\ \bottomrule
\end{tabular}}
%\vspace{-0.2cm}
\label{tab:naca}
\end{wraptable}
% 
% \textbf{Results.} \autoref{tab:naca} summarizes our numerical results. Similar to the toy dataset, PropEn (variant) designs exhibit improved properties as compared to the explicit guidance baseline. 
% Continuing with ablation studies, in \autoref{fig:three_plots}b and c, we observe that higher values for the matching thresholds in PropEn result in better improvement rates. Notably, all designs within a PropEn trajectory are deemed plausible, as illustrated in the accompanying figure. Additionally, we notice a consistent increase in the lift-to-drag ratio (Cl/Cd) along the optimiziation trajectory, until convergence is achieved. This trend underscores the efficacy of PropEn in iteratively refining airfoil designs to enhance their aerodynamic performance.
%Interestingly, we find that PropEn exhibits greater creativity as the threshold for shape dissimilarity (dx) increases. This phenomenon allows for more significant jumps between different clusters of shapes, enabling PropEn to explore a broader design space efficiently.
% 
\textbf{Results.} Our numerical findings are summarized in \autoref{tab:naca}. Similar to the toy dataset, the designs produced by PropEn variants demonstrate enhanced properties compared to those guided explicitly. Delving into further analysis with ablation studies, depicted in \autoref{fig:three_plots}(b) and (c), we observe that increasing the matching thresholds in PropEn correlates with higher rates of improvement. Remarkably, all designs within a PropEn trajectory are deemed plausible, as depicted in the accompanying figure. Moreover, a consistent enhancement in the lift-to-drag ratio $C_l / C_d$ is noted along the optimization trajectory until convergence. This consistent trend underscores the effectiveness of PropEn in progressively refining airfoil designs to bolster their aerodynamic performance.

Interestingly, we find that in larger training datasets the threshold for property improvement ($\Delta_y$) may not be necessary for optimization, as the Propen x2x demonstrates satisfactory performance.

We also notice that the mixup variant of PropEn may require longer training. This observation is consistent with the notion that mixup, by introducing a regularization term in the training loss, may improve at a slower rate compared to other PropEn variants. This slower improvement can be attributed to the regularization term's tendency to pull generated designs closer to the initial seed, thereby limiting the extent of exploration in the design space.

%Results to show:
%\begin{itemize}
 %   \item Explain data collection process, each shape is represented as a vector for 100 x (x,y) coordinates (data dimensionality is 400). 
%    \item Ablation of choice of $d(x, x')$ and $d(y, y')$
%    \item Multi modal sampling (like mixture of shapes, 3 clusters based on range of NACA parameters)
%    \item show trajectory evolution of shapes and improvement of property
%    \item mention that Cl/Cd is multi-property, show results on 2d plot
%\end{itemize}

%Discussion:  Higher values for the thresholds lead to better improvement rates, all designs in a propen trajectory are plausible (shown in figure), and the Cl/Cd increases at each step until convergence. Propen gets more creative as dx increases and allows for jumps between different clusters of shapes. In larger train datasets, dy is not needed for optimization, xx version is doinf well enough. Mixup might need longer training. Makes sense that improves at a smaller rate because it pulls towards seed.
\begin{figure}[t]
  \centering
  \begin{subfigure}[b]{0.38\textwidth}
    \includegraphics[width=\textwidth]{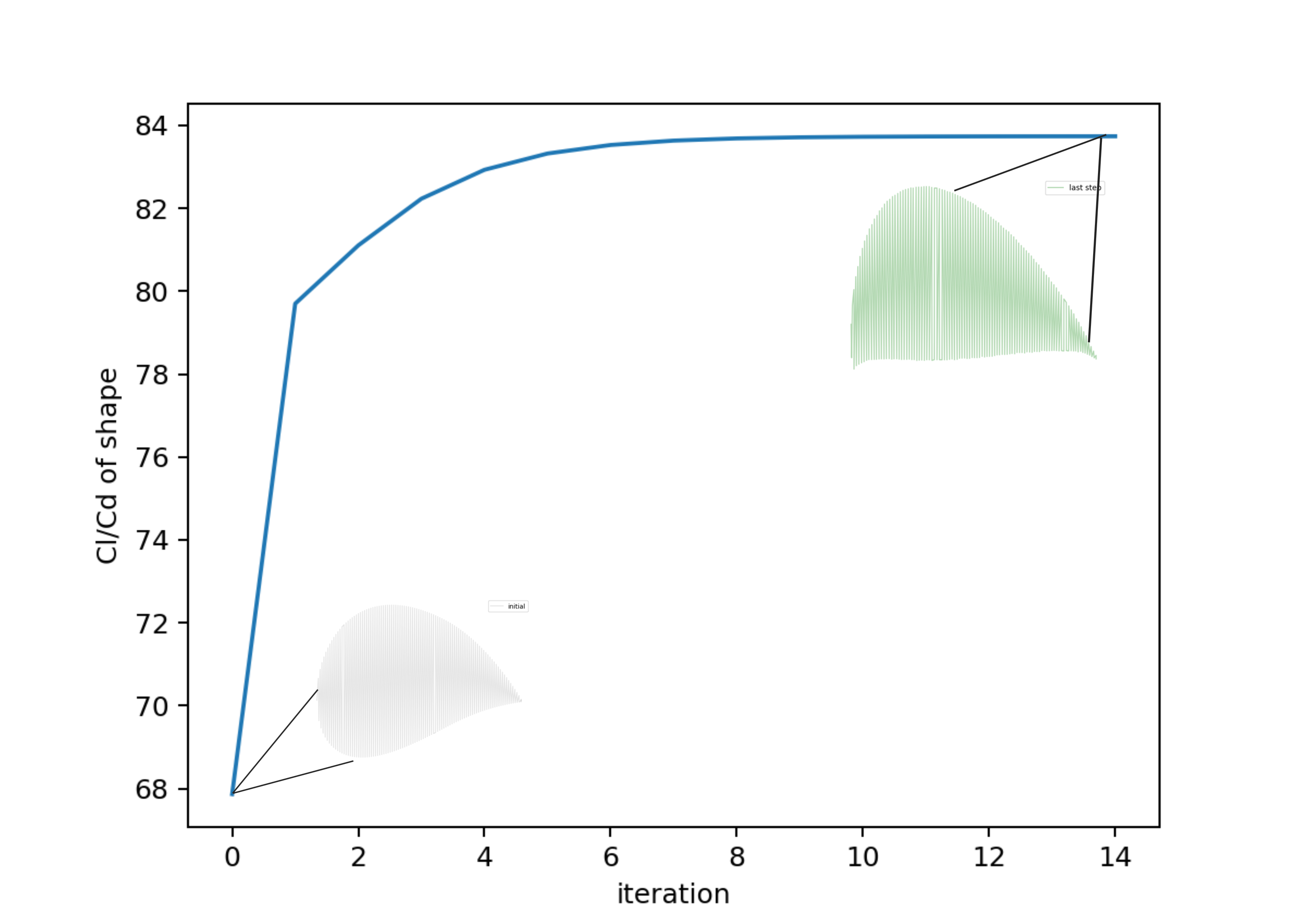}
    %\caption{}
    \label{fig:plot1}
  \end{subfigure}
  \begin{subfigure}[b]{0.6\textwidth}
    \includegraphics[width=\textwidth]{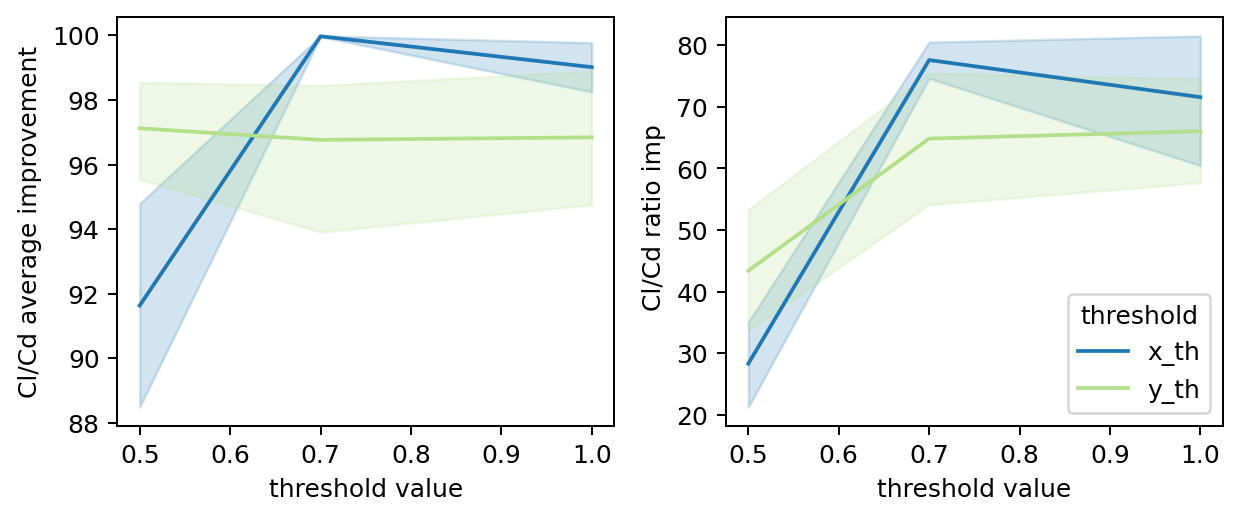}
    %\caption{The impact of choice of threshold for the feature and property in the matching phase.}
    \label{fig:plot3}
  \end{subfigure}
  \caption{Ablation studies for PropEn on Airfoils. (a) PropEn improves Cl/Cd ratio along its trajectory and produces realistic/valid airfoil shapes. (b and c) the impact of choice of threshold $\Delta_x$ and $\Delta_y$ in the matching phase.}
  \label{fig:three_plots}
\end{figure}

%\begin{table}[ht]
%\centering
%\caption{First results on NACA foil. We optimize for the lift to drag ratio (higher is better). Small network, only 10 neurons per layer. Results average over 5 seeds, best marked with *.}
%\resizebox{0.9\textwidth}{!}{%

%\small
%\begin{tabular}{@{}lllll@{}}

%\toprule
% & \multicolumn{2}{l}{N=100} & \multicolumn{2}{l}{N=500} \\ \midrule
% & AI          & RI          & AI          & RI          \\ \midrule
%\textbf{PropEn mixup x2x} &$67.08 \pm 24.96$ &$80.05 \pm 8.73$ 
%&$12.64 \pm 8.65$ &$35.95 \pm 32.94$ \\
%\textbf{PropEn mixup xy2xy} &$96.67 \pm 5.77$ &$73.00 \pm 28.79$ 
%&$25.12 \pm 18.01$ &$15.85 \pm 6.37$ \\
%\textbf{PropEn x2x} &$95.00 \pm 5.77$ &$98.97 \pm 2.05$ 
%&$54.71 \pm 42.90$* &$86.44 \pm 16.33$ * \\
%\textbf{PropEn xy2xy} &$100.00 \pm 0.00$* &$100.00 \pm 0.00$ *
%&$44.80 \pm 26.47$ &$83.02 \pm 7.79$  \\
%\textbf{AE with guidance} &$72.57 \pm 16.35$ &$65.39 \pm 26.41$ 
%&$36.90 \pm 38.00$ &$18.49 \pm 12.62$   \\ \bottomrule
%\end{tabular}}

%\end{table}

%\begin{table}[ht]

\subsection{\textit{In vitro} experiment: therapeutic protein optimization}

% \textbf{Context.} 
The design of antibodies good functional and developability properties is essential for developing effective treatments for diseases ranging from cancer to autoimmune disorders. We here focus on the task of optimizing the binding affinity of a starting antibody (the seed) while staying close to it in terms of edit distance. The task is refereed to as affinity maturation in the drug design literature and constitutes an essential and challenging step in any antibody design campaign. 

Antibody binding affinity refers to the strength of the interaction between an antibody molecule and its target antigen. High binding affinity is crucial in antibody-based therapeutics as it determines the antibody's ability to recognize and bind to its target with high specificity and efficiency. We follow the standard practice of quantifying binding affinity by the negative log ratio of the association and dissociation constants (pKD), which represents the concentration of antigen required to dissociate half of the bound antibody molecules. Higher pKD indicates a tighter and more stable interaction, leading to improved therapeutic outcomes such as enhanced neutralization of pathogens or targeted delivery of drugs to specific cells.

\textbf{Data \& experimental design.} The data collection process involved conducting low-throughput Surface Plasmon Resonance (SPR) experiments aimed at measuring with high accuracy the binding affinity of antibodies targeting three different target antigens: the human epidermal growth factor receptor 2 (HER2) and two additional targets that we denote as T1 and T2. For each of those targets, one or more seed designs were selected by domain experts. In the case of HER2, we used the cancer drug Herceptin as seed. We ensured the correctness of the SPR measurements by validating the fit of the SPR kinetic curves according to standard practices. 
% 
% HER2 is of significant interest in biomedical research and clinical practice due to its role as a biomarker and therapeutic target in various cancers, particularly breast cancer. Amplification or overexpression of the HER2 gene can lead to increased HER2 protein levels, contributing to the aggressive behavior of cancer cells and poorer patient prognosis. Consequently, HER2-positive cancers are associated with higher recurrence rates and reduced survival outcomes. 
% 
% \textbf{Implementation details.} 

As the targets differ in the properties of their binding sites, we trained a PropEn model per each target (but for all seeds for that target jointly). For this application, we opted for the PropEn x2x mix variant. %the most suitable variant is PropEn mix x2x because the noise in experimental measurements renders the x2y variants unreliable. 
The reconstruction of the original sequence (mix) complies with antibody engineering wisdom that a candidate design should not deviate from a seed by more than small number of mutations.
%Intuitively multiple seeds and their neighbours per target can be imagined as the multiple clusters in the toy data example. 
Similar to~\cite{martinkus2023abdiffuser}, we used a one-hot encoded representation of antibodies aligned according to the AHo numbering scheme~\citep{honegger2001yet} determined by ANARCI~\cite{10.1093/bioinformatics/btv552}. The encoder-decoder architecture is based on a ResNet~\citep{he2016deep} with 3 blocks each. We compare PropEn with four strong baselines: two state-of-the-art methods for guided and unguided antibody design namely walk-jump sampler~\cite{frey2024protein} and lambo~\cite{gruver2023protein}; as well as two variants of a diffusion model trained on AHo antibody sequences differing on their use of guidance. The first one (labeled as {\bf diffusion}) is based on a variational diffusion model \cite{kingma2021variational} trained on a latent space obtained by projecting AHo 1-hot representation using an encoder-decoder type of architecture similar to {\it PropEn}'s architecture; encoder-decoder model is trained simultaneously with the diffusion model. The second one (labeled as {\bf diffusion(guided)}) is a variant of the first one with added guidance based on the {\it iterative latent variable refinement} idea described in the paper by Choi {\it et al.} \cite{choi2021ilvr}, which ensures generating samples that are close to the initial seed.

We evaluate the set of designs in terms of their binding rate (fraction of designs tested that were binders), the percentage of designs than improve the seed, and their binding affinity improvement (pKD design - pKD of seed). 
%Therapeutic interventions targeting HER2, such as monoclonal antibodies like trastuzumab (Herceptin), have revolutionized the treatment landscape for HER2-positive breast cancer, offering improved survival rates and quality of life for affected individuals.

\begin{figure}[t!]
  \centering
  \begin{subfigure}[b]{0.49\textwidth}
    \includegraphics[width=\textwidth]{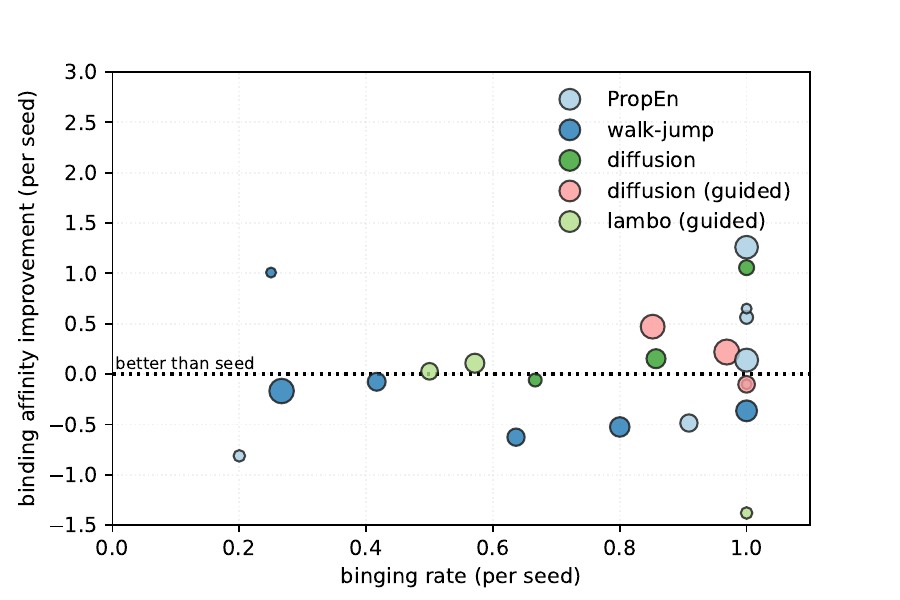}
    \caption{}
    \label{fig:pareto}
  \end{subfigure}
  \begin{subfigure}[b]{0.49\textwidth}
    \includegraphics[width=\textwidth]{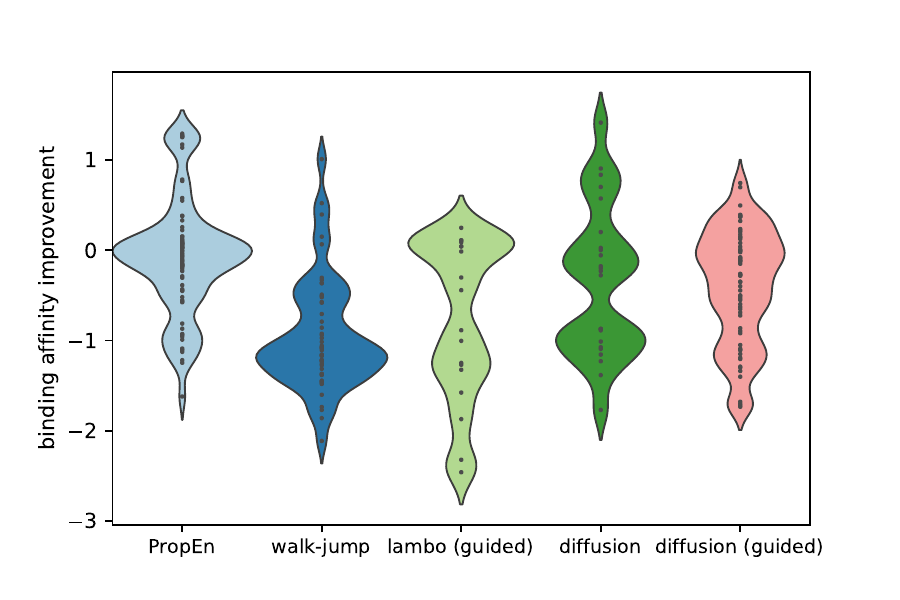}
    \caption{}
    \label{fig:violin}
  \end{subfigure}
  \caption{Therapeutic protein optimization results: (a) The left figure contrasts the binding rate with the 90-th percentile of the binding affinity improvement for each method and seed. Points on the top-right are on the Pareto front. (b) The right figure focuses on binders and reports the histograms of binding affinity improvement across all designs and seeds.}
  %\vspace{-3mm}
  \label{fig:xxx}
\end{figure}

\begin{table}[t]
\centering
\caption{Binding rate (and number of designs submitted). Higher is better.}
\label{table:binding}
\resizebox{\textwidth}{!}{
\begin{tabular}{@{}lccccccccc@{}}
\toprule
                    & Herceptin & T1S1 & T1S2 & T1S3 & T2S1 & T2S2 & T2S3 & T2S4 & overall \\ \midrule
\textbf{PropEn}             & 90.9$\%$ (11) & 100.0$\%$ (4) & 100.0$\%$ (6) & 100.0$\%$ (24) & 20.0$\%$ (5) & 100.0$\%$ (23) & 100.0$\%$ (16) & 100.0$\%$ (4) & \textbf{94.6$\%$  (93)} \\
\textbf{walk-jump}~\cite{frey2024protein}          &   -   & 25.0$\%$ (4) & 80.0$\%$ (15) & 100.0$\%$ (18) & 26.7$\%$ (30) & 41.7$\%$ (12) & 100.0$\%$ (15) & 63.6$\%$ (11) & 62.9$\%$ (105) \\
\textbf{lambo (guided)}~\cite{gruver2023protein}     & 50.0$\%$ (10) &  0.0$\%$ (4) &   -   & 100.0$\%$ (5) &  0.0$\%$ (9) &   -   & 100.0$\%$ (1) & 57.1$\%$ (14) & 44.2$\%$ (43) \\
\textbf{diffusion}          &   -   & 100.0$\%$ (8) & 85.7$\%$ (14) &   -   &   -   &   -   & 88.2$\%$ (17) & 66.7$\%$ (6) & 86.7$\%$ (45) \\
\textbf{diffusion (guided)} &   -   & 85.2$\%$ (27) & 96.9$\%$ (32) &   -   &   -   &   -   & 93.3$\%$ (15) & 100.0$\%$ (10) & 92.9$\%$ (84) \\
\bottomrule
\end{tabular}}
%\vspace{-3mm}
\end{table}
\begin{table}[h]
\caption{Fraction of designs improving the seed and total designs tested. Higher is better.\label{table:affinity}}
\resizebox{\textwidth}{!}{
\begin{tabular}{@{}lccccccccc@{}}
\toprule
                    & Herceptin & T1S1 & T1S2 & T1S3 & T2S1 & T2S2 & T2S3 & T2S4 & overall \\ \midrule
\textbf{PropEn}             &  0.0$\%$ (11) & 100.0$\%$ (4) & 33.3$\%$ (6) & 41.7$\%$ (24) &  0.0$\%$ (5) & 69.6$\%$ (23) &  0.0$\%$ (16) &  0.0$\%$ (4) & \textbf{34.4$\%$ (93)} \\
\textbf{walk-jump}~\cite{frey2024protein}          &   -   & 25.0$\%$ (4) &  6.7$\%$ (15) &  5.6$\%$ (18) &  3.3$\%$ (30) &  8.3$\%$ (12) &  0.0$\%$ (15) &  0.0$\%$ (11) &  4.8$\%$ (105) \\
\textbf{lambo (guided)}~\cite{gruver2023protein}     & 10.0$\%$ (10) &  0.0$\%$ (4) &   -   &  0.0$\%$ (5) &  0.0$\%$ (9) &   -   &  0.0$\%$ (1) & 35.7$\%$ (14) & 14.0$\%$ (43) \\
\textbf{diffusion}          &   -   & 62.5$\%$ (8) & 14.3$\%$ (14) &   -   &   -   &   -   &  0.0$\%$ (17) &  0.0$\%$ (6) & 15.6$\%$ (45) \\
\textbf{diffusion (guided)} &   -   & 51.9$\%$ (27) & 15.6$\%$ (32) &   -   &   -   &   -   &  0.0$\%$ (15) &  0.0$\%$ (10) & 22.6$\%$ (84) \\
\bottomrule
\end{tabular}}
%\vspace{-3mm}
\end{table}
\textbf{Results.} As seen in Tables~\ref{table:binding} and~\ref{table:affinity}, PropEn excelled in generating functional antibodies with consistently high binding rates (94.6\%) and 34.5\% of the tested designs showed improve binding than the seed, outpacing other models in overall performance.
To account for the trade-off between binding rate and affinity improvement (larger affinity improvement requires making risky mutations that might end-up killing binding), we visualize the Pareto front in Figure~\ref{fig:pareto}. In the plot, we mark the performance of each method for a specific seed design by placing a marker based on the achieved binding rate (x-axis) and maximum affinity improvement (y-axis). Compared to baseline methods, PropEn struck a beneficial trade-off, on average achieving a larger affinity improvement than methods with a high binding rate. 

Figure~\ref{fig:violin} takes a closer look at the affinity improvement on the subset of designs that bound. As observed, all models produced some binders that were better than the seed, speaking for the strength of all considered models. Interestingly, none the top three models in terms of binding affinity improvement relied on explicit guidance, which aligns with our argument about the brittleness of explicit guidance in low-data regimes. %with the none of the three of the five models (including PropEn) that identified a binder that improved the seed by at least one pKD unit (10 times better binder). Interestingly, none of these models relied on explicit guidance, demonstrating the strength of all five models. 
Out of the those three models, PropEn generated two designs that improved the seed by at least one pKD unit (10 times better binder) followed by the walk-jump and the unguided diffusion model, that generated one such design each.

% Compared to guided methods (SeqVDM, LamBO), PropEn offers a streamlined approach, bypassing the need for guidance which involves complex implementation and engineering tricks.
%We also notice that propen is more stable than the baseliens, as it can consistently provide >70\% binding rate per seed. 
% Additionally, we looked into the gain with regards to the training data, namely weather propen can generate samples with properties outside of the training distribution. Our results show that indeed, it can, which is very exciting and significant result for the computational biology community. 
% Looking ahead, further enhancements could involve incorporating domain-based distance metrics such as BLOSUM \citep{henikoff1992amino}, which may provide additional insights and refinements to the optimization process.

%Discussion: We show that propen generates functional antibodies with highest binding rates, and largest improvement. Propen beats diffusion based methods while avoiding complicated implementation and engineering tricks. Next steps are to use more domain-based distance metrics as BLOSUM. 

%In case we have a figure where we can explain the mutations proposed by propen that would be great. Doesn't have to be the best binder. 
\textbf{Limitations.}
The matching step adds some computation overhead with complexity depending on the choice of distance metric. Since PropEn is targeting low data-regime applications, scalability was out of scope for the current work. However, we are considering on-the-fly distance evaluation or parallelisation across multiple nodes.
In its current implementation, PropEn allows for single-property enhancement, which does not meet industry expectations where each design should simultaneously adhere to multiple properties of interest. Additionally, PropEn requires a sufficient amount of training data with measurements, making it applicable to low-data regimes but not zero-shot cases. 
The choice of distance metric for matching to a certain extent, can be considered a limitation because it requires understanding of the context for the application. However, this choice is also what allows for incorporating domain knowledge and constraints, which can be meaningful and necessary in the domain of interest (edit distance for antibodies, deviations only in the camber of the airfoil etc). 

\section{Related work}
As design optimization has been ubiquitous across science domains, naturally our approach relates to a variety of methods and applications. In the molecular design domain,  data are bound to discrete representation which can be challenging for ML methods. A natural way to circumvent that is by optimization in a latent continuous space. Bombarelli et al.~\cite{gomez2018automatic} presented such an approach in their seminal work on optimizing molecules for chemical properties, and it has since spawn across different domains \citep{bradshaw2019model}. Recently, one of the common way for obtaining embeddings for explicit guidance relies on language models \cite{krause2023improving, li2023machine}. One of the challenges of using a latent space is the issue for blindly guiding the optimization into ambiguous regions of the manifold where no training data was available \citep{janz2017actively, kusner2017grammar}. Follow up works attempt to address this problem \citep{brookes2019conditioning, griffiths2020constrained, notin2021improving} by incorporating uncertainty estimates into black-box optimization. 

Another line of work perhaps more close to our approach is the notion of neural translation, where the goal is to go from one language to another by training on aligned datasets. \citep{jin2018learning, damani2019black} have build on this idea to improve properties for small molecules translating one graph or sequence to another one with better properties. 
%[Jakoola] \citep{liu2020chance} suggest  Continuous Revision of Combinatorial Structures by matching.
% However, all of these works leverage the latent space for optimization and focus on a single (discrete input) application at a time. 
These works propose tailored approaches for domain specific applications. With PropEn we take a step further and propose a domain-agnostic framework that is empirically validated across different domains (uniquely including wetlab experiments). We also derive novel theoretical guarantees that illustrate the relation of the generated samples with the property gradient, as well as provide guarantees that our designs fall within distribution.

Previous works have also considered learning an optimizer for some function based on observed samples \citep{shlezinger2022model, agrawal2020learning, monga2021algorithm}. This is usually achieved by either (i) rendering the optimizer differentiable and training some of its hyperparameters; or (ii) by unfolding the iterative optimizer and treating each iteration as a trainable layer. Our approach is different, thanks to the matched reconstruction objective that lets us implicitly approximate the gradient of a function of interest.

%These methods differ from what we do here and importantly do not also consider the function approximation problem as we do here. 

%It will be helpful to have some guidelines on which metrics align with which design optimization contexts.
%It would be valuable if there is some transferability between PropEn models, that is, can we leverage a model trained on T1 for optimizing designs of T2? This will require incorporating some knowledge of the targets in the input representations of PropEn which may help in learning meaningful physico-chemical interactions.  We are actively working on these topics and aim to develop a more accommodating version of PropEn in the near future.

\section{Conclusion}
\label{sec:conclusion}

We introduced PropEn, a new method for implicit guidance in design optimization that approximates the gradient for a property of interest. We achieve this by leveraging matched datasets, which increase the size of the training data and inherently include the direction of property enhancement.
Our findings highlight the versatility and effectiveness of PropEn in optimizing designs in engineering and drug discovery domains. We include wet lab in-vitro results for comparison with state-of-the-art baselines in therapeutic protein design. By utilizing thresholds for shape dissimilarity and property improvement, PropEn efficiently navigates the design space, generating diverse and high-performance configurations. We believe our method offers a simple yet effective recipe for design optimization that can be applied across various scientific domains.

\bibliographystyle{plain}  % Choose the style that suits your needs
\bibliography{references}
%%%%%%%%%%%%%%%%%%%%%%%%%%%%%%%%%%%%%%%%%%%%%%%%%%%%%%%%%%%%
\appendix
\newpage
\section*{Appendix / supplemental material}
%%%%%%%%%%%%%%%%%%%%%%%%%%%%%%%%%%%%%%%%%%%%%%%%%%%%%%%%%%%%

% ===========================================
%\section{Additional results on toy datasets}
% ===========================================

% ===========================================
\section{Deferred proofs}
\label{sec:more_proofs}
% ===========================================

\subsection{Proof of Theorem 1}
\label{app:proof_thm1}

\begin{proof}\label{thm:thm_1_proof}
The matched reconstruction objective for the MSE loss can be expressed as 
\begin{align}
\argmin_{\theta} \sum_{x \sim \mathcal{X}}\sum_{\delta_x \sim U(0, \Delta_x)^m} \| f_\theta(x) - (x + \delta_x)\|^2 \, \mathbbm{1}(g(x)< g(x + \delta_x)).
\label{eq:main_loss}
\end{align}
Assuming a model that is sufficient overparametrized, we can w.l.o.g. suppose that there is some $\theta_x$ which minimizes every $x$ and thus swap the sum and argmin. We are left with the objective of minimizing the inner sum: %, which we will denote $\Delta_{\theta, x}$.
\begin{align}
%\argmin_{\theta_x} \Delta_{\theta, x} =
\argmin_{\theta_x} \sum_{\delta_x \sim U(0, \Delta_x)^m} \| f_{\theta_x}(x) - (x + \delta_x)\|^2 \, \mathbbm{1}(g(x)< g(x + \delta_x)) 
\end{align}
Noting that the the squared loss is minimized by the expected value and taking the data limit we get:
\begin{align}
f_{\theta_x}(x) = 
\frac{1}{C}\sum_{\delta_x \sim \mathcal{U}(0, \Delta_x)^m} \delta_x \, \mathbbm{1}(g(x)< g(x + \delta_x))  \nonumber \\
 \underset{n \to \infty}{\longrightarrow}  
 \mathbb{E}_{\delta_x \sim \mathcal{U}(0, \Delta_x)^m}(\delta_x \ | \ g(x) < g(x+\delta_x)) 
 \label{eq:star1}
\end{align}
where constant $C$ is needed because the expected value should only be computed on the non-zero terms.
To proceed, we consider a rotation matrix $R_x$ for which the following holds:
\begin{align}
R_x\nabla g (x) = 
\begin{pmatrix} 
    \|\nabla g (x)\|^2_2 \\
    0 \\
    \vdots \\
    0
    \end{pmatrix}
\end{align}\label{eq:final_star}
and introduce the reparametrization $z=R_x\delta_x$. The last vector is also uniformly distributed 
$$ 
    z \sim R_x \mathcal{U}(0, \Delta_x)^m \sim  \mathcal{U}(0, \Delta_x)^m %R_x^{-1}t \sim \mathcal{U}(0, \Delta_x)^m,
$$ 
due to the uniform distribution on a ball being rotation invariant. Note that above we write $R_x \mathcal{U}(0, \Delta_x)^m$ to mean the push-forward of the distribution through the inverse rotation.

We can now rewrite \autoref{eq:star1} as follows:
\begin{align}
\mathbb{E}_{\delta_x \sim \mathcal{U}(0, \Delta_x)^m}(\delta_x)|g(x) < g(x+\delta_x) 
&=
R_x^{-1}\mathbb{E}_{\delta_x \sim \mathcal{U}(0, \Delta_x)^m}(R_x\delta_x) \ | \ g(x) < g(x+\delta_x) \\
&=
R_x^{-1}\mathbb{E}_{t \sim \mathcal{U}(0, \Delta_x)^m}t \ | \ g(x) < g(x+R_x^{-1}z) \\
&=
R_x^{-1}\mathbb{E}_{z \sim \mathcal{U}(0, \Delta_x)^m}z \ | \ 0< \nabla g^\top(x) \, R_x^{-1}z + \epsilon(g,x,\Delta_x))
% &=
% y^\top \begin{pmatrix} 
%     \|\nabla g (x)\|^2_2 \\
%     0 \\
%     \vdots \\
%     0
%     \end{pmatrix}.
\label{eq:tmp}
\end{align}
Above, to go from line 7 to line 8 we Taylor expand $g$ around $x$:
\begin{align*}
g(x + R_x^{-1}z) = g(x) + \nabla g(x)^\top R_x^{-1}z + \epsilon(g,x,\Delta_x)
\end{align*}
with $|\epsilon(g,x,\Delta_x)|=O(\Delta_x^2)L$ and $L$ being the Lipschitz constant of $\nabla g$.

The conditional in \autoref{eq:tmp} is equal to
\begin{align}
-\epsilon(g,x,\Delta_x) < \nabla g(x)^\top R_x^{-1} z
% = (\nabla g^\top A_x^{-1} t)^\top = t^\top (A_x^{-1})^\top \nabla g
= z^\top R_x \nabla g^\top(x) = z^\top\begin{pmatrix} 
    \|\nabla g (x)\|^2_2 \\
    0 \\
    \vdots \\
    0
    \end{pmatrix} % = t_1 \, \|\nabla g (x)\|^2_2
\end{align}
or equivalently
\begin{align}
    z_1 > \frac{-\epsilon(g,x,\Delta_x)}{ \|\nabla g (x)\|^2_2},
\end{align}
where we have denoted the first coordinate of $z$ as $z_1$. 

Thus, setting $c = \frac{-\epsilon(g,x,\Delta_x)}{ \|\nabla g (x)\|^2_2}$, \autoref{eq:tmp} can be re-written as
\begin{align}
R_x^{-1}\mathbb{E}_{\delta_x \sim \mathcal{U}(0, \Delta_x)^m}(z | z_1 > c) 
&= R_x^{-1} 
    \begin{pmatrix} 
        \mathbb{E}(z_1|z_1 >c)\\
        \vdots \\
        \mathbb{E}(z_m|z_1 >c)
    \end{pmatrix} \\
&= \mathbb{E}(z_1|z_1 > c) \, R_x^{-1}\begin{pmatrix} 
    1 \\
    0 \\
    \vdots \\
    0
    \end{pmatrix} \\
&= \frac{\mathbb{E}(z_1|z_1 >c)}{\|\nabla g (x)\|^2_2} R_x^{-1} R_x \nabla g  (x)
= \frac{\mathbb{E}(z_1|z_1 >c)}{\|\nabla g (x)\|^2_2}\, \nabla g (x).
\end{align}
We further note that $\mathbb{E}(z_1|z_1 > c) = (\Delta_x-\frac{-\epsilon(g,x,\Delta_x)}{ \|\nabla g (x)\|^2_2})/2$ due to $z_1$ being distributed uniformly in $U(0, \Delta_x)$. 
We have thus shown that 
\begin{align}
    f_{\theta_x}(x) \underset{n \to \infty}{\longrightarrow} \frac{(\Delta_x+\frac{\epsilon(g,x,\Delta_x)}{ \|\nabla g (x)\|^2_2})}{2 \|\nabla g (x)\|^2_2}\, \nabla g (x),
\end{align}
and the constant in front of the gradient is positive for $\Delta_x > \frac{|\epsilon(g,x,\Delta_x)|}{ \|\nabla g (x)\|^2_2}$. The latter condition is met whenever $ \Delta_x = O(\frac{\|\nabla g (x)\|^2_2}{L})$.
\end{proof}

% ===============================================================
\subsection{Understanding the relation between the learned direction and the property gradient}
% ===============================================================

% dx = x' - x
% f_\theta(x) \rightarrow \delta_x
% \tilde{f} = f - x
% \| f - (x+ \delta_x)\| = \|\tilde{f} - dx \|
% \tilde{f} = \nabla g
% f = x + \nabla g

% \begin{Theorem} Under the assumption of universal approximator for the parameters $\theta$, with access to a manifold where the data is uniformly distributed as $\mathcal{U}(0, \Delta_x)^m$, in large sample size, as $n \rightarrow \infty$, $f_{\theta_x}(x) \rightarrow k \pi(\nabla g (x)) + x$ .
% \end{Theorem}

Our approach entails constructing a conditional `\textit{matching distribution}':  
\begin{align}
    \mu_x(x') \propto 
    \begin{cases}
        p(x') & \text{if} \quad \|x' - x\|^2_2 \leq \Delta_x, \ g(x') - g(x)\in (\delta_y, \Delta_y]\\
        0 & \text{otherwise},
    \end{cases}    
\end{align}
and then training a model to optimize the following regularized matched reconstruction objective: 
\begin{align*}
    \ell(f, \hat{p}) = \mathbb{E}_{x \sim \hat{p}} [ \mathbb{E}_{x' \sim \hat{\mu}_x} [\ell(x', f(x)) + \beta \, \ell(x, f(x))] ].    
\end{align*}
Note that by $\hat{p}(x)$ we denote the empirical density supported on the training set 
 $\hat{p}(x) = \frac{1}{n}\sum_{i=1}^n \delta(x_i - x)$ (and analogously for $\hat{\mu}_x$).

\begin{Theorem}
Let $f^*$ be the optimal solution of the matched reconstruction objective~\ref{eq:loss}. For any point $x$, the global minimizer is given by 
\begin{align}
    f^*(x) = \frac{\mathbb{E}_{x' \sim \hat{\mu}_x}[ x' ] + \beta x}{1 + \beta}.
\end{align}
% 
% where $\hat{\mu}_x$ is the empirical measure corresponding to $\mu_x$ and induced by the training data. 
Further, for a $\lambda_1$-Lipschitz and $\lambda_2$-smooth function $g$, the vector $f^*(x) - x$ is $a$-colinear with the gradient of $g$:
\begin{align*}
    a \leq \frac{\nabla g(x)^\top (f^*(x) - x)}{\|\nabla g (x)\|_2 \|f^*(x) - x\|_2} %\quad \text{given} \quad \Delta_x < 2 \, \frac{\delta_y - \alpha (1+\beta) \lambda_1 \|f^*(x) - x\|_2 }{\lambda_2}
\end{align*}
whenever $\Delta_x < 2 \, \frac{\delta_y - \alpha \lambda_1 \|\mathbb{E}_{x' \sim \hat{\mu}_x}[ x' ] - (1-\beta) x\|_2 }{\lambda_2}$.
\end{Theorem}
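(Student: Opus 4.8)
The plan is to treat the two claims separately, since the global-minimizer formula is a routine pointwise computation and the colinearity bound builds directly on it. For the first part, I would specialize $\ell$ to the MSE case and, exactly as in the proof of Theorem~1, invoke over-parametrization so that the outer expectation over $x \sim \hat{p}$ decouples and each point may be optimized independently. For a fixed $x$ the inner objective in the value $v = f(x)$ is $\mathbb{E}_{x' \sim \hat{\mu}_x}[\|x' - v\|_2^2] + \beta\,\|x - v\|_2^2$, a strictly convex quadratic. Differentiating in $v$ and setting the gradient to zero gives $(1+\beta)\,v = \mathbb{E}_{x' \sim \hat{\mu}_x}[x'] + \beta x$, which is the stated formula, and strict convexity certifies it as the unique global minimizer.

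For the colinearity claim, the first step is to record that this minimizer satisfies $f^*(x) - x = \frac{1}{1+\beta}\bigl(\mathbb{E}_{x' \sim \hat{\mu}_x}[x'] - x\bigr)$, i.e. it is a strictly positive scalar multiple of the mean matched displacement $u := \mathbb{E}_{x' \sim \hat{\mu}_x}[x' - x]$. Because the cosine similarity with $\nabla g(x)$ is invariant under positive rescaling, it then suffices to lower-bound $\nabla g(x)^\top u / (\|\nabla g(x)\|_2\,\|u\|_2)$, and the same bound transfers verbatim to $f^*(x) - x$.

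The core estimate is a lower bound on the gradient projection $\nabla g(x)^\top u$. For every matched $x'$ the matching constraints supply $g(x') - g(x) > \delta_y$ together with $\|x' - x\|_2^2 \le \Delta_x$. A second-order Taylor expansion of the $\lambda_2$-smooth function $g$ at $x$ yields $g(x') - g(x) = \nabla g(x)^\top (x' - x) + R$ with $|R| \le \tfrac{\lambda_2}{2}\|x'-x\|_2^2 \le \tfrac{\lambda_2}{2}\Delta_x$, so $\nabla g(x)^\top(x'-x) \ge \delta_y - \tfrac{\lambda_2}{2}\Delta_x$; averaging over $\hat{\mu}_x$ gives $\nabla g(x)^\top u \ge \delta_y - \tfrac{\lambda_2}{2}\Delta_x$. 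Bounding $\|\nabla g(x)\|_2 \le \lambda_1$ by $\lambda_1$-Lipschitzness, the stated threshold on $\Delta_x$ (with $\alpha$ identified as the target colinearity $a$) is precisely the requirement that $\delta_y - \tfrac{\lambda_2}{2}\Delta_x$ exceed $a\,\lambda_1$ times the matched-displacement norm appearing in the condition, hence exceed $a\,\|\nabla g(x)\|_2\,\|u\|_2$. Chaining these inequalities and dividing by $\|\nabla g(x)\|_2\,\|u\|_2$ delivers $\cos\angle(u,\nabla g(x)) \ge a$, which by the scale-invariance observation is the claim.

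I expect the gradient-projection estimate to be the main obstacle. The matched points obey only a \emph{one-sided} property margin, so their displacements are guaranteed a positive component along $\nabla g(x)$, but the component orthogonal to the gradient is a priori uncontrolled and could in principle spoil the alignment. The role of the smallness of $\Delta_x$ together with the Lipschitz and smoothness constants is exactly to keep the guaranteed positive projection dominant over the total displacement norm $\|u\|_2$; the fact that $\|u\|_2$ itself appears inside the threshold is harmless, since $u$ is fixed by the data and the matching distribution, so the inequality is read as a genuine upper bound on $\Delta_x$. A secondary technicality will be justifying the pointwise decoupling in the over-parametrized limit, which I would import directly from the argument used in the proof of Theorem~1.
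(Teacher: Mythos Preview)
Your proposal is correct and matches the paper's proof: both compute the minimizer by setting the MSE gradient to zero pointwise, then Taylor-expand $g$ to second order, use $g(x')-g(x)>\delta_y$ and $\|x'-x\|_2^2\le\Delta_x$ to lower-bound $\nabla g(x)^\top(\mathbb{E}_{\hat{\mu}_x}[x']-x)$ by $\delta_y-\tfrac{\lambda_2}{2}\Delta_x$, and finish via $\|\nabla g(x)\|_2\le\lambda_1$. Your scale-invariance shortcut (passing from $f^*(x)-x$ to $u$) is equivalent to the paper's explicit $(1+\beta)$ bookkeeping, so the two arguments coincide step for step.
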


\begin{proof}
% Denote by $\mu_x $ the distribution defined as 
% % 
% \begin{align}
%     \mu_x(x') \propto 
%     \begin{cases}
%         p(x') & \text{if} \quad \|x' - x\| \leq \Delta_x, \ g(x') - g(x)\in (\delta_y, \Delta_y]\\
%         0 & \text{otherwise}.
%     \end{cases}
% \end{align}
% 
The global minimizer of the matched reconstruction objective for the MSE loss is given by 
\begin{align}
    f^*(x) = \frac{\mathbb{E}_{x' \sim \hat{\mu}_x}[ x' ] + \beta x}{1 + \beta}
\end{align}
which directly follows by taking the gradient of the mean-squared error loss and setting it to zero.

We next consider a $C^2$ function $g$ and Taylor expand it around $x$:
\begin{align*}
g(x') = g(x) + \nabla g(x)^\top (x' - x) + \epsilon(x,x'),
\end{align*}
where the norm of the approximation error is at most $|\epsilon(x,x')| \leq \lambda_2 \|x-x'\|^2_2/2 \leq \lambda_2 \Delta_x / 2$. 

Taking the expectation w.r.t. $\hat{\mu}_x$ yields
\begin{align*}
\mathbb{E}_{x' \sim \hat{\mu}_x}[g(x')] - g(x) &= \nabla g(x)^\top (\mathbb{E}_{x' \sim \hat{\mu}_x}[x'] - x) + \mathbb{E}_{x' \sim \hat{\mu}_x}[\epsilon(x,x')] \\
\Leftrightarrow 
\mathbb{E}_{x' \sim \hat{\mu}_x}[g(x') - \epsilon(x,x') ] - g(x) &= \nabla g(x)^\top (f^*(x) - x) (1+\beta),
\end{align*}
where in the last step we substituted the expectation by $f'(x)$.
We notice that, as long as $\mathbb{E}_{x' \sim \hat{\mu}_x}[g(x') - \epsilon(x,x') ] - g(x) > 0$, the learned direction is pointing towards a similar direction as the gradient:
\begin{align}
    \measuredangle(\nabla g (x), f^*(x) - x) = \text{arc cos} \left(\frac{\nabla g(x)^\top (f^*(x) - x)}{\|\nabla g (x)\|_2 \|f^*(x) - x\|_2}  \right) \leq 90^\circ  
\end{align}
We expand the term within the condition in the following: 
\begin{align*}
    \mathbb{E}_{x' \sim \hat{\mu}_x}[g(x') - \epsilon(x,x') ] - g(x) 
    &\geq \inf_{x'} \mathbb{E}_{x' \sim \hat{\mu}_x}[g(x') - |\epsilon(x,x')|] - g(x) \\
    &\geq \delta_y - \frac{\lambda_2 \Delta_x}{2}
\end{align*}
to determine that the following sufficient condition for $\measuredangle(\nabla g(x)^\top, f^*(x) - x)$ to be below 90 degrees:
\begin{align*}
    \Delta_x < \frac{2 \delta_y}{\lambda_2}.
\end{align*}
More generally, since 
\begin{align*}
    \frac{\nabla g(x)^\top (f^*(x) - x)}{\|\nabla g(x)^\top\| \|f^*(x) - x\|} 
    & \geq \frac{\delta_y - \frac{\lambda_2 \Delta_x}{2}}{(1+\beta) \lambda_1 \|f^*(x) - x\|_2}
\end{align*}
a sufficient condition for the normalized inner product to be above $\alpha$ is 
\begin{align*}
    a \leq \frac{\nabla g(x)^\top (f^*(x) - x)}{\|\nabla g(x)^\top\| \|f^*(x) - x\|} 
    \Leftarrow
    \Delta_x < 2 \, \frac{\delta_y - \alpha (1+\beta) \lambda_1 \|f^*(x) - x\|_2 }{\lambda_2} 
\end{align*}
One may also set $\|f^*(x) - x\|_2 =  \frac{\mathbb{E}_{x' \sim \hat{\mu}_x}[ x' ] - (1-\beta) x}{1 + \beta}$ in the equation above to obtain the condition
\begin{align*}
    \Delta_x < 2 \, \frac{\delta_y - \alpha \lambda_1 \|\mathbb{E}_{x' \sim \hat{\mu}_x}[ x' ] - (1-\beta) x\|_2 }{\lambda_2}. 
\end{align*}
as claimed.
\end{proof}

\subsection{Proof of Theorem 2}
\label{app:proof_thm2}

%We next show that the optimized samples are almost as likely as our training set according to the data distribution $p$.

%\begin{Theorem}
%Consider a model $f^*$ trained to minimize the matched reconstruction objective with an MSE loss and $\beta=0$. The probability of $f^*(x)$ is at least
% 
%\begin{align*}
%     p(f^*(x)) \geq \mathbb{E}_{x' \sim \hat{\mu}_x }[p(x')]  - \frac{ \| H_p(f(x))\|_2 \, \sigma^2(\mathcal{M}_x)}{2},
%\end{align*}
% 
%where $\hat{\mu}_x$ is the empirical measure on the dataset, $H_p(x)$ is the Hessian of $p$ at $x$ and $\sigma^2(\mathcal{M}_x) = \mathbb{E}_{x' \sim \hat{\mu}_x }[\| x' - \mathbb{E}_{x'' \sim \hat{\mu}_x}[x''] \|_2^2]$ is the variance induced by the matching process. 
%\end{Theorem}

\begin{proof}\label{thm:thm_2_proof}
Set $x' = f^*(x)$.
Taking a Taylor expansion of $p$ around $x'$, we deduce that for every $x'' \in D$ the following holds:
\begin{align*}
    p(x_i) \leq p(x') + \nabla p (x')^\top (x'' - x) + \frac{ \| H_p(x')\|_2 \, \| x'' - x\|_2^2 }{2}, 
\end{align*}
with $\| H_p(x')\|_2$ being the Hessian of $p$ at $x'$.
Taking the expectation w.r.t. $\hat{\mu}_x$ yields
\begin{align*}
     \mathbb{E}_{x'' \sim \hat{\mu}_x }[p(x'')] \leq p(x') + \nabla p (x')^\top (\mathbb{E}_{x'' \sim \hat{\mu}_x }[x''] - x') + \frac{ \| H_p(x')\|_2 \, \mathbb{E}_{x'' \sim \hat{\mu}_x }[\| x'' - x'\|_2^2] }{2}
\end{align*}
or equivalently
\begin{align*}
     p(x') \geq \mathbb{E}_{x'' \sim \hat{\mu}_x }[p(x'')] - \nabla p (x')^\top (\mathbb{E}_{x'' \sim \hat{\mu}_x }[x''] - x') - \frac{ \| H_p(x')\|_2 \, \mathbb{E}_{x'' \sim \hat{\mu}_x }[\| x'' - x'\|_2^2] }{2}
\end{align*}
If we further assume that the model is trained to minimize the matched reconstruction objective with an MSE loss, we obtain the claimed result:
\begin{align*}
     p(x') \geq \mathbb{E}_{x'' \sim \hat{\mu}_x }[p(x'')]  - \frac{ \| H_p(x')\|_2 \, \sigma^2(\mathcal{M}_x)}{2},
\end{align*}
where $\sigma^2(\mathcal{M}_x) = \mathbb{E}_{x' \sim \hat{\mu}_x }[\| x' - \mathbb{E}_{x'' \sim \hat{\mu}_x}[x''] \|_2^2]$ is the variance induced by the matching process. 
\end{proof}

Observe that, since the variance is upper bounded by $\sigma^2(\mathcal{M}_x) \leq 4 \Delta_x$, one may provide higher likelihood samples by restricting the matching process to consider closer pairs. Further, as perhaps expected, the likelihood is higher for smoother densities.

\subsection{Understanding the matched reconstruction objective}
% ====================================

The following analysis analyses the key characteristics of the proposed method. Though our analysis does not take into account the training process with stochastic gradient descent and the potentially useful inductive biases of the neural network $f_\theta$, it provides useful insights that qualitatively align with the practical model behavior.    

% ...............................................................
\textbf{Property 1. The learned direction follows the property gradient.}
We first show that by optimizing the matched reconstruction objective, the model will learn to adjust an input point by approximately following the gradient $\nabla g (x)$ of the property function. 

For ease of notation, we denote by $\mathcal{M}_x$ the subset of our paired dataset with all pairs that start from point $x$. 
%With this in place, we prove the following in Appendix~\ref{app:TODO}:

% Our approach entails constructing a conditional `\textit{matching distribution}':  
% % 
% \begin{align}
%     \mu_x(x') \propto 
%     \begin{cases}
%         p(x') & \text{if} \quad \|x' - x\|^2_2 \leq \Delta_x, \ g(x') - g(x)\in (\delta_y, \Delta_y]\\
%         0 & \text{otherwise},
%     \end{cases}    
% \end{align}
% 

\begin{Theorem}
Let $f^*_\beta$ be the optimal solution of the~\ref{eq:loss}. 
% For any point $x$, the global minimizer is given by 
% % 
% \begin{align}
%     f^*_\beta(x) = \frac{ \frac{1}{|\mathcal{M}_x|}\sum_{x' \in \mathcal{M}_x} x' + \beta x}{1 + \beta}.
% \end{align}
% 
% Further, 
For a $\lambda_1$-Lipschitz and $\lambda_2$-smooth function $g$, the vector $v_x = f^*_\beta(x) - x$ is $a$-colinear with the gradient of $g$:
\begin{align*}
    a \leq \frac{\nabla g(x)^\top v_x}{\|\nabla g (x)\|_2 \|v_x\|_2} %\quad \text{given} \quad \Delta_x < 2 \, \frac{\delta_y - \alpha (1+\beta) \lambda_1 \|f^*(x) - x\|_2 }{\lambda_2}
\end{align*}
whenever $\Delta_x < \frac{2}{\lambda_2} \, \left(\delta_y - \alpha \lambda_1 (1+\beta) \|f^*_\beta(x) - x\|_2\right)$.
\label{theorem:gradient}
\end{Theorem}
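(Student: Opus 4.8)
The plan is to first obtain a closed form for the minimizer $f^*_\beta$ and then relate the displacement $v_x = f^*_\beta(x) - x$ to $\nabla g(x)$ through a first-order Taylor expansion whose error is controlled by the smoothness of $g$. First I would compute the global minimizer of the regularized~\ref{eq:loss}. Since the objective is a sum of squared reconstruction errors in the output $f(x)$, differentiating with respect to $f(x)$ and setting the gradient to zero yields the convex combination
\begin{align*}
f^*_\beta(x) = \frac{\mathbb{E}_{x' \sim \hat{\mu}_x}[x'] + \beta x}{1+\beta},
\end{align*}
so that $v_x = \frac{1}{1+\beta}\left(\mathbb{E}_{x' \sim \hat{\mu}_x}[x'] - x\right)$. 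This reduces the claim to controlling the inner product between $\nabla g(x)$ and the mean displacement $\mathbb{E}_{x' \sim \hat{\mu}_x}[x'] - x$ of the matched neighbors.

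Next I would Taylor expand $g$ about $x$: for every matched neighbor $x'$ we have $g(x') = g(x) + \nabla g(x)^\top (x' - x) + \epsilon(x,x')$ with $|\epsilon(x,x')| \le \frac{\lambda_2}{2}\|x-x'\|_2^2 \le \frac{\lambda_2 \Delta_x}{2}$ by $\lambda_2$-smoothness together with the matching radius constraint $\|x'-x\|_2^2 \le \Delta_x$. Taking expectations over $\hat{\mu}_x$ and substituting the expression for $v_x$ gives
\begin{align*}
\nabla g(x)^\top v_x \,(1+\beta) = \mathbb{E}_{x' \sim \hat{\mu}_x}[g(x')] - g(x) - \mathbb{E}_{x' \sim \hat{\mu}_x}[\epsilon(x,x')].
\end{align*}
The crucial step is to lower-bound the right-hand side using the matching construction itself: by definition of $\hat{\mu}_x$ every matched pair satisfies $g(x') - g(x) > \delta_y$, so the first expectation is at least $\delta_y$, while the error term is bounded in absolute value by $\frac{\lambda_2 \Delta_x}{2}$. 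Hence $\nabla g(x)^\top v_x \,(1+\beta) \ge \delta_y - \frac{\lambda_2 \Delta_x}{2}$.

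Finally I would normalize. Dividing through by $\|\nabla g(x)\|_2\|v_x\|_2$ and invoking $\lambda_1$-Lipschitzness ($\|\nabla g(x)\|_2 \le \lambda_1$) yields
\begin{align*}
\frac{\nabla g(x)^\top v_x}{\|\nabla g(x)\|_2\|v_x\|_2} \ge \frac{\delta_y - \frac{\lambda_2 \Delta_x}{2}}{(1+\beta)\lambda_1 \|v_x\|_2}.
\end{align*}
Requiring this lower bound to be at least $a$ and solving for $\Delta_x$ reproduces the stated threshold $\Delta_x < \frac{2}{\lambda_2}\big(\delta_y - a\,\lambda_1 (1+\beta)\|v_x\|_2\big)$. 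I expect the main obstacle to be bookkeeping around the Taylor error and the normalization rather than any deep idea: one must ensure the numerator $\delta_y - \frac{\lambda_2 \Delta_x}{2}$ stays strictly positive (which the threshold guarantees) so that the cosine lower bound is meaningful and its sign is correct, and one must carefully propagate the factor $(1+\beta)$ introduced by the regularizer through every step. This argument is essentially the second half of the earlier colinearity result, so the same computation applies verbatim with $\delta_y$ supplying the positive margin in place of the isotropy assumption used in \autoref{thm:thm_1}.
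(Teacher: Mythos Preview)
Your proposal is correct and follows essentially the same route as the paper: compute the closed-form minimizer $f^*_\beta(x) = (\mathbb{E}_{x'\sim\hat\mu_x}[x'] + \beta x)/(1+\beta)$, Taylor-expand $g$ about $x$ with a $\lambda_2 \Delta_x/2$ remainder, lower-bound the expected property gain by $\delta_y$ from the matching constraint, and then normalize using $\|\nabla g(x)\|_2 \le \lambda_1$ to extract the threshold on $\Delta_x$. The bookkeeping you flag (positivity of the numerator and the $(1+\beta)$ factor) is exactly what the paper tracks as well.
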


The theorem reveals the role of the introduced thresholds in controlling how close we follow the property gradient: the larger the required property change $\delta_y$ and the smaller the allowed input deviation $\Delta_x$ the closer $v_x$ tracks the gradient of $g$. We thus observe a trade-off between approximation and dataset size: selecting less conservative thresholds will lead to a larger $|\mathcal{M}|$ at the expense of a rougher approximation.

% ...............................................................
\textbf{Property 2. Regularization controls the step size.}
% ...............................................................
% 
Hyperparameter $\beta$ enables us to control the magnitude of the allowed input change (the step size in optimization parlance), with larger $\beta$ encouraging smaller changes: 
\begin{Corollary}
In the setting of Theorem~\ref{theorem:gradient}, we have:
$
    \|f^*_\beta(x) - x\|_2 = \frac{\|f^*_0(x) - x\|_2}{1 + \beta}.    
$  
\end{Corollary}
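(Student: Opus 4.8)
The plan is to reduce the statement to a direct computation using the closed-form global minimizer already established earlier in the excerpt. Recall that, in the setting of Theorem~\ref{theorem:gradient}, the optimal solution of the regularized matched reconstruction objective under MSE loss is
\begin{align*}
    f^*_\beta(x) = \frac{\mathbb{E}_{x' \sim \hat{\mu}_x}[ x' ] + \beta x}{1 + \beta},
\end{align*}
which follows by differentiating the objective and setting the gradient to zero. Since the corollary is a claim purely about the norm of the displacement vector $f^*_\beta(x) - x$, I would simply substitute this expression and simplify, rather than revisiting any of the gradient-alignment machinery.

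First I would compute the displacement for a general $\beta$. Writing $x = \frac{(1+\beta)x}{1+\beta}$ and combining over the common denominator gives
\begin{align*}
    f^*_\beta(x) - x = \frac{\mathbb{E}_{x' \sim \hat{\mu}_x}[ x' ] + \beta x - (1+\beta) x}{1 + \beta} = \frac{\mathbb{E}_{x' \sim \hat{\mu}_x}[ x' ] - x}{1 + \beta}.
\end{align*}
Next I would specialize the same minimizer formula to $\beta = 0$, which yields $f^*_0(x) = \mathbb{E}_{x' \sim \hat{\mu}_x}[ x' ]$, and hence $f^*_0(x) - x = \mathbb{E}_{x' \sim \hat{\mu}_x}[ x' ] - x$. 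Combining the two displays shows that $f^*_\beta(x) - x = \tfrac{1}{1+\beta}\,(f^*_0(x) - x)$, i.e. the regularized displacement is just a positive rescaling of the unregularized one.

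The final step is to take Euclidean norms of both sides and pull out the positive scalar $\tfrac{1}{1+\beta}$, giving $\|f^*_\beta(x) - x\|_2 = \tfrac{1}{1+\beta}\,\|f^*_0(x) - x\|_2$, which is exactly the claimed identity. In truth there is no genuine obstacle here: the entire content is algebraic, and the only point requiring care is to invoke the closed-form minimizer correctly and to recognize that $\beta \geq 0$ so that $1+\beta > 0$ and the scalar can be extracted from the norm without a sign issue. The result is best read as an interpretive corollary — it makes explicit that $\beta$ acts as an inverse step-size knob — so I would keep the write-up to these few lines rather than dressing it up.
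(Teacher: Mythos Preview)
Your proposal is correct and is precisely the intended argument: the paper states the corollary without a separate proof because it is immediate from the closed-form minimizer $f^*_\beta(x) = \frac{\mathbb{E}_{x' \sim \hat{\mu}_x}[x'] + \beta x}{1+\beta}$ established in Theorem~\ref{theorem:gradient}, and your algebra recovering $f^*_\beta(x) - x = \tfrac{1}{1+\beta}(f^*_0(x) - x)$ is exactly that computation.
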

Therefore, setting $\beta$ to a larger value can be handy for conservative design problems that prioritize distance constraints or when the property function is known to be complex.

\newpage

\section{Additional results for the experiments}
%\subsection{Illustrative example ctn}

%\paragraph{Ordered manifold} Another implication of the matched reconstruction objective is that PropEn learns a manifold which is ordered wrt the property we optimize for. As shown in figure XY, a umap and tsne embedding of the latent space of PropEn, show that it learns how to order the designs such that the property of interest monotonically improves. This feature of propen is useful for diagnostics, namely, one can decide if a PropEn training has converged by observing the latent space. If the latent space preserves the monotonicity wrt the property, one can expect candidate designs with superior propery values than seeds.

%\paragraph{Controlling for the optimization step size}
% if theres no space, move this to experiments.
%By slight modification in implementation we can also have PropEn output candidate designs with a desired imporvement, or optimization step size. We simply concatenate the property values to the features and proceed with the training of propen as usual. By doing so, at test time we have the opportunity to control for the size of imporvement we want for a seed design. We present such controlled optimization in XYZ.
%\subsection{Examples of generated Anti-HER2 candidates}

\begin{figure}[h]
    \centering
    \includegraphics[width=\textwidth]{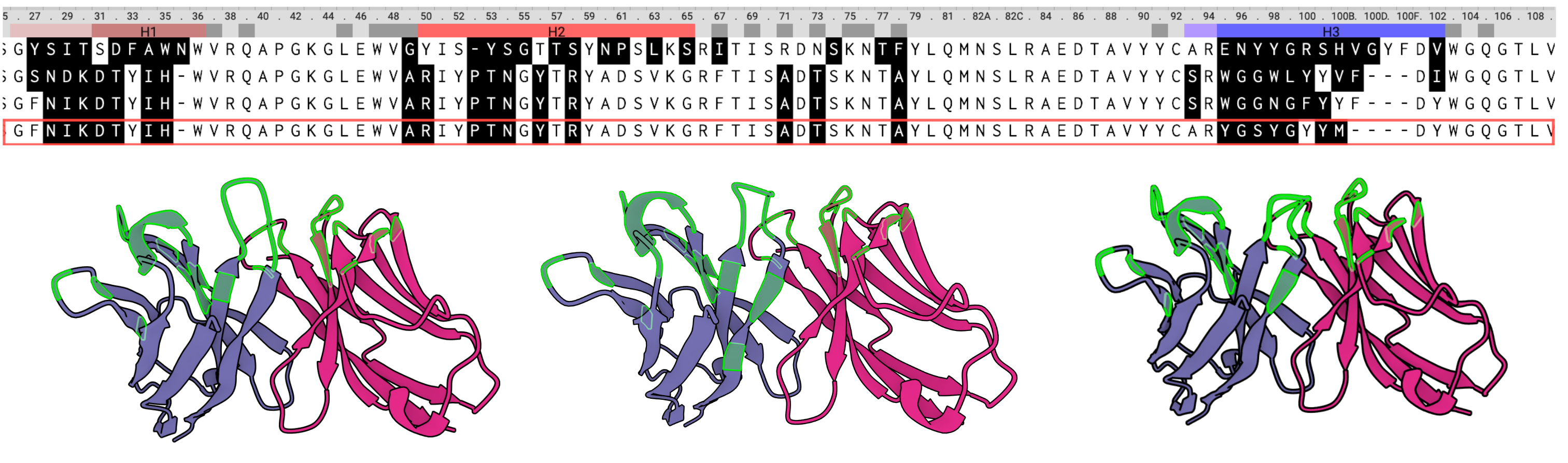}
    \caption{HER2 binders from PropEn validated in wet lab experiments. Up: sequence alignment of the heavy chains wrt the seed which is the top sequence. The positions marked in black correspond to mutational differences from the seed. Bottom: folded structures of the corresponding designs with the mutations to seed marked in green.}
    \label{fig:her2_binders}
\end{figure}

%\subsection{Example of generated airfoil candidate designs}

\begin{figure}[h]
    \centering
    \includegraphics[width=\textwidth]{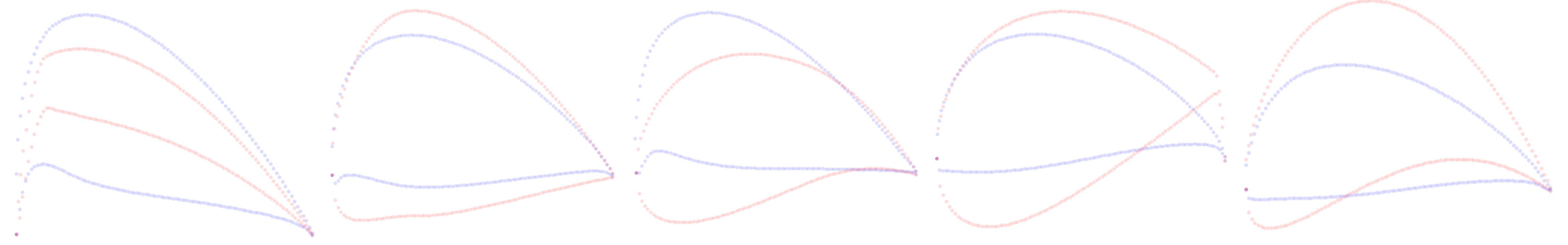}
    \caption{Airfoils optimized with PropEn from multiple seeds. Red: seed designs, blue: PropEn candidates. To reduce the lift-drag ratio, PropEn tends to flatten the bottom of the airfoil to reduce the drag, while extend the front upwards to increase lift.}
    \label{fig:shapes}
\end{figure}

%\begin{itemize}
 %   \item architectures
%    \item hyperparams
%    \item pairs, dataset statistics
%\end{itemize}

\begin{figure}[h]
    \includegraphics[width=0.9\textwidth]{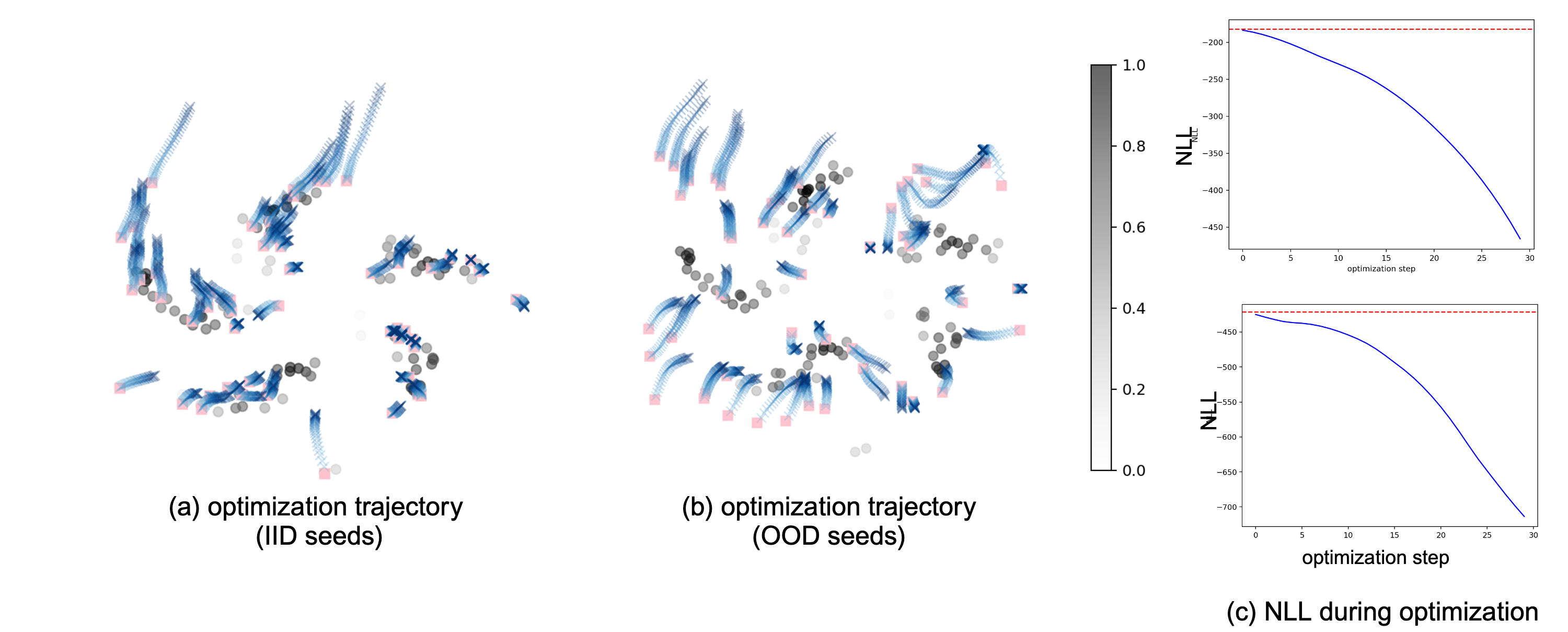}
    \caption{Showcase of Explicit guidance with IID and OOD seeds. Analogous to the implicit results in Figure 2.}
    \label{fig:ae_collage}
\end{figure}
\paragraph{Empirical validation of isotropy assumption.}
In \autoref{fig:iso_ass} show the distribution of $\Delta_x$ in the toy datasets. Each panel contains the distributions for 3 data points randomly selected from the training data. We choose to show only three for better visibility. 

\begin{figure}[t]
    \centering
    \includegraphics[width=0.4\textwidth]{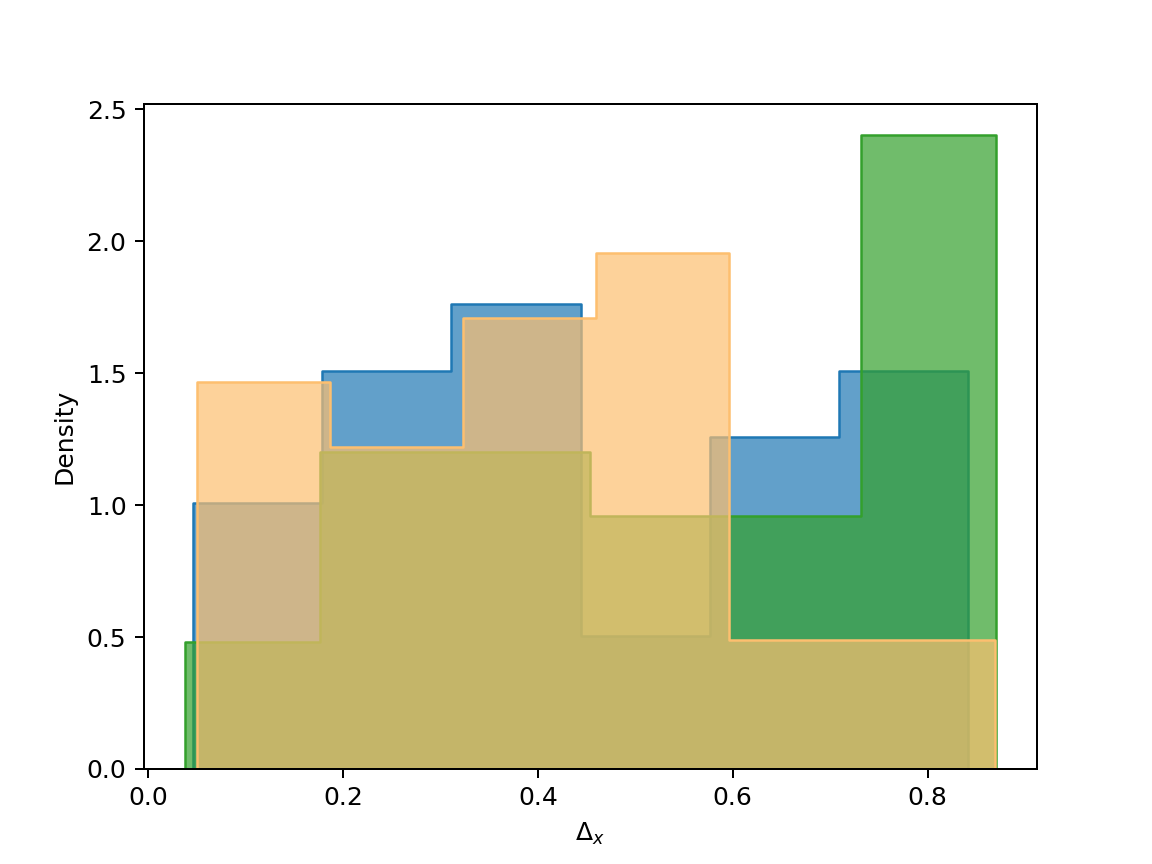}
    \includegraphics[width=0.4\textwidth]{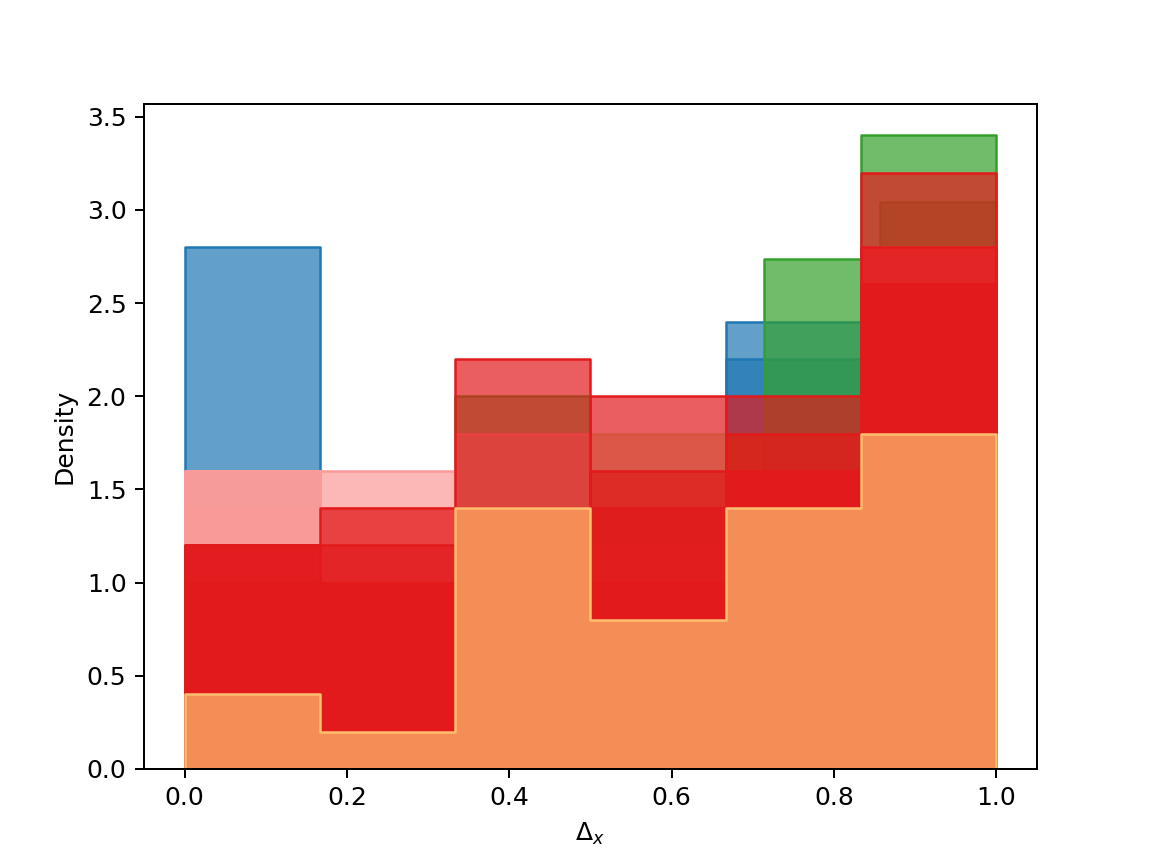}
    \caption{Empirical validation of the isotropy assumption in toy datasets. Left: pinwheel, Right: 8-Gaussians. Each histogram corresponds to distances from a single point to its matches.}
    \label{fig:iso_ass}
\end{figure}

\subsection{Details on experimental setups}
\paragraph{Toy datasets} 
\begin{itemize}
    \item architecture: 2 layer MLP with 30 neurons per layer and ReLU activation
    \item latent space dimension: 15
    \item ablation studies: $N \in \{ 100, 200\}, d \in \{ 2, 10, 50, 100\}$
    \item matching thresholds: $\Delta_x = \Delta_y = 1,$
    \item number of epochs: 500, batch size: 64
    \item explicit guidance with MLP of same architecture as the encoder
\end{itemize}

\paragraph{Airfoil} 
\begin{itemize}
    \item architecture: 3 layer MLP with 100 neurons per layer and ReLU activation
    \item latent space dimension: 50
    \item ablation studies: $N \in \{ 200, 500, 1000\}, $
     \item matching thresholds: $\Delta_x = \Delta_y  \in \{ 0.3, 0.5, 0.7, 1 \}$
    \item number of epochs: 1000, batch size: 100
    \item explicit guidance with MLP of same architecture as the encoder
\end{itemize}

\paragraph{Therapeutic Proteins} 
\begin{itemize}
    \item architecture: ResNet with 3 blocks; one hot encoded input of AHo aligned sequence representation
    \item latent space dimension: 256
    \item batch size: 32, training epochs: 300
    \item matching thresholds: $\Delta_x = \in \{10, 15 \}$ mutational, edit distance, $\Delta_y = 0.5$
    \item baselines: LamBo and WJS were used with the implementation details as described in the corresponding publications and code release.
\end{itemize}

%\subsection{Diagnostics with latent space visualisation}
%We have observed that when training PropEn the latent space tends to get ordered by the property value. Once could use this qualitative metric for monitoring the training and disgnostics.

%\subsection{Additional results for antibodies (expression, stability)}

\subsection{Tabular results for toy experiments}
\label{sec:app_toy_tab}
\begin{table}
\centering
\caption{Propen vs AE with guidance results on toy datasets, end of optimization trajectory. $N$ is the number of samples, $d$ is the number of dimensions for projection (2D $\rightarrow$ $d$D datasets). Mean (std) over 10 repetitions of the experiment. For all metrics, higher is better. }
\resizebox{\textwidth}{!}{%
\small
\centering
\begin{tabular}{@{}lllllll@{}}
\toprule \toprule
          & \multicolumn{3}{c}{\textbf{N=100}}     & \multicolumn{3}{c}{\textbf{N=200}}                      \\ \midrule
          \textbf{} & \textbf{d=10} & \textbf{d=50} & \textbf{d=100}  & \textbf{d=10} & \textbf{d=50} & \textbf{d=100} \\
\midrule \midrule
\textbf{}  & \multicolumn{6}{c}{\textbf{8-Gaussians}}   \\ \midrule \midrule
\textbf{}  & \multicolumn{6}{c}{\textbf{ratio imp}}   \\ \midrule 
\textbf{PropEn mix x2x} &$85.71 \pm 5.01$ &$77.89 \pm 16.79$ &$81.58 \pm 12.58$ &$92.82 \pm 5.64$ &$93.16 \pm 6.66$ &$91.88 \pm 8.49$  \\ \midrule
\textbf{PropEn mix xy2xy} &$68.42 \pm 18.23$ &$65.41 \pm 9.04$ &$80.70 \pm 6.37$ &$79.49 \pm 10.44$ &$75.78 \pm 13.27$ &$79.49 \pm 13.57$  \\ \midrule
\textbf{PropEn x2x} &$90.06 \pm 11.30$ &$91.81 \pm 10.23$ &$85.09 \pm 11.25$ &$93.33 \pm 4.87$ &$94.36 \pm 5.64$ &$93.77 \pm 5.51$  \\ \midrule
\textbf{PropEn xy2xy} &$73.03 \pm 13.91$ &$79.70 \pm 11.94$ &$82.11 \pm 9.56$ &$89.01 \pm 5.87$ &$78.39 \pm 6.92$ &$79.77 \pm 10.76$  \\ \midrule
\textbf{Explicit} &$48.80 \pm 8.52$ &$49.76 \pm 5.45$ &$51.67 \pm 6.58$ &$48.02 \pm 4.88$ &$48.02 \pm 4.15$ &$50.58 \pm 5.01$  \\ \midrule \midrule
\textbf{}  & \multicolumn{6}{c}{\textbf{loglike}}   \\ \midrule 
\textbf{PropEn mix x2x} &$-36.34 \pm 1.28$ &$-37.06 \pm 2.53$ &$-37.08 \pm 3.22$ &$-77.42 \pm 2.53$ &$-77.61 \pm 4.25$ &$-77.50 \pm 2.68$  \\ \midrule
\textbf{PropEn mix xy2xy} &$-40.51 \pm 4.90$ &$-43.51 \pm 5.76$ &$-37.71 \pm 2.67$ &$-80.17 \pm 3.19$ &$-83.16 \pm 2.20$ &$-82.52 \pm 3.73$  \\ \midrule
\textbf{PropEn x2x} &$-36.13 \pm 2.68$ &$-35.12 \pm 2.76$ &$-36.04 \pm 2.91$ &$-76.95 \pm 2.70$ &$-76.80 \pm 4.83$ &$-77.55 \pm 3.27$  \\ \midrule
\textbf{PropEn xy2xy} &$-38.51 \pm 3.44$ &$-38.70 \pm 2.18$ &$-38.47 \pm 5.12$ &$-79.00 \pm 3.67$ &$-81.79 \pm 4.87$ &$-82.92 \pm 4.05$  \\ \midrule
\textbf{Explicit} &$-45.00 \pm 2.50$ &$-44.11 \pm 2.46$ &$-44.64 \pm 1.71$ &$-99.34 \pm 3.02$ &$-99.62 \pm 3.84$ &$-98.88 \pm 3.63$  \\ \midrule
\textbf{}  & \multicolumn{6}{c}{\textbf{uniqness}}   \\ \midrule
\textbf{PropEn mix x2x} &$93.98 \pm 9.33$ &$97.37 \pm 3.72$ &$89.47 \pm 12.26$ &$82.05 \pm 7.15$ &$77.78 \pm 6.01$ &$78.20 \pm 12.00$  \\ \midrule
\textbf{PropEn mix xy2xy} &$96.84 \pm 5.08$ &$98.50 \pm 3.98$ &$93.86 \pm 3.96$ &$81.09 \pm 10.16$ &$84.62 \pm 8.79$ &$86.08 \pm 6.08$  \\ \midrule
\textbf{PropEn x2x} &$79.53 \pm 10.67$ &$74.27 \pm 14.76$ &$77.19 \pm 14.38$ &$57.69 \pm 8.48$ &$57.95 \pm 12.69$ &$56.04 \pm 8.69$  \\ \midrule
\textbf{PropEn xy2xy} &$78.95 \pm 9.75$ &$72.93 \pm 10.71$ &$81.05 \pm 16.48$ &$65.57 \pm 17.19$ &$55.68 \pm 8.34$ &$54.70 \pm 11.75$  \\ \midrule
\textbf{Explicit} &$100.00 \pm 0.00$ &$100.00 \pm 0.00$ &$100.00 \pm 0.00$ &$100.00 \pm 0.00$ &$100.00 \pm 0.00$ &$100.00 \pm 0.00$  \\ \midrule
\textbf{}  & \multicolumn{6}{c}{\textbf{novelty}}   \\ \midrule
\textbf{PropEn mix x2x} &$100.00 \pm 0.00$ &$100.00 \pm 0.00$ &$100.00 \pm 0.00$ &$100.00 \pm 0.00$ &$100.00 \pm 0.00$ &$100.00 \pm 0.00$  \\ \midrule
\textbf{PropEn mix xy2xy} &$100.00 \pm 0.00$ &$100.00 \pm 0.00$ &$100.00 \pm 0.00$ &$100.00 \pm 0.00$ &$100.00 \pm 0.00$ &$100.00 \pm 0.00$  \\ \midrule
\textbf{PropEn x2x} &$100.00 \pm 0.00$ &$100.00 \pm 0.00$ &$100.00 \pm 0.00$ &$100.00 \pm 0.00$ &$100.00 \pm 0.00$ &$100.00 \pm 0.00$  \\ \midrule
\textbf{PropEn xy2xy} &$100.00 \pm 0.00$ &$100.00 \pm 0.00$ &$100.00 \pm 0.00$ &$100.00 \pm 0.00$ &$100.00 \pm 0.00$ &$100.00 \pm 0.00$  \\ \midrule
\textbf{Explicit} &$100.00 \pm 0.00$ &$100.00 \pm 0.00$ &$100.00 \pm 0.00$ &$100.00 \pm 0.00$ &$100.00 \pm 0.00$ &$100.00 \pm 0.00$  \\ \midrule \midrule
\textbf{}  & \multicolumn{6}{c}{\textbf{pinwheel}}   \\ \midrule \midrule

\textbf{}  & \multicolumn{6}{c}{\textbf{ratio imp}}   \\ \midrule
\textbf{PropEn mix x2x} &$86.73 \pm 10.56$ &$84.02 \pm 13.94$ &$80.94 \pm 10.80$ &$87.55 \pm 10.41$ &$91.35 \pm 7.75$ &$86.41 \pm 5.68$  \\ \midrule
\textbf{PropEn mix xy2xy} &$71.56 \pm 13.09$ &$66.32 \pm 9.56$ &$70.39 \pm 15.13$ &$56.41 \pm 15.88$ &$51.57 \pm 20.53$ &$61.54 \pm 12.33$  \\ \midrule
\textbf{PropEn x2x} &$89.29 \pm 10.09$ &$85.97 \pm 9.49$ &$89.47 \pm 12.89$ &$91.03 \pm 9.54$ &$92.67 \pm 6.69$ &$87.75 \pm 8.48$  \\ \midrule
\textbf{PropEn xy2xy} &$71.67 \pm 12.14$ &$65.13 \pm 17.10$ &$69.67 \pm 9.04$ &$52.38 \pm 19.46$ &$47.86 \pm 16.98$ &$63.40 \pm 17.54$  \\ \midrule
\textbf{Explicit} &$52.92 \pm 5.66$ &$49.55 \pm 5.14$ &$50.45 \pm 6.12$ &$51.75 \pm 4.70$ &$50.12 \pm 4.64$ &$50.35 \pm 3.85$  \\ \midrule
\textbf{}  & \multicolumn{6}{c}{\textbf{loglike}}   \\ \midrule
\textbf{PropEn mix x2x} &$-31.48 \pm 2.70$ &$-31.44 \pm 2.62$ &$-31.73 \pm 0.96$ &$-68.34 \pm 3.31$ &$-67.43 \pm 2.70$ &$-69.00 \pm 2.71$  \\ \midrule
\textbf{PropEn mix xy2xy} &$-34.25 \pm 2.13$ &$-35.33 \pm 1.26$ &$-35.06 \pm 3.84$ &$-83.83 \pm 8.83$ &$-86.89 \pm 11.47$ &$-80.11 \pm 4.17$  \\ \midrule
\textbf{PropEn x2x} &$-31.47 \pm 1.36$ &$-31.12 \pm 2.20$ &$-31.37 \pm 3.44$ &$-68.18 \pm 2.88$ &$-66.66 \pm 1.96$ &$-68.12 \pm 2.92$  \\ \midrule
\textbf{PropEn xy2xy} &$-34.96 \pm 3.91$ &$-37.10 \pm 4.35$ &$-35.50 \pm 2.29$ &$-87.73 \pm 8.75$ &$-88.06 \pm 13.58$ &$-79.83 \pm 7.58$  \\ \midrule
\textbf{Explicit} &$-40.40 \pm 1.93$ &$-39.58 \pm 2.25$ &$-40.14 \pm 1.80$ &$-90.32 \pm 5.00$ &$-88.90 \pm 3.16$ &$-87.91 \pm 4.63$  \\ \midrule
\textbf{}  & \multicolumn{6}{c}{\textbf{uniqness}}   \\ \midrule
\textbf{PropEn mix x2x} &$92.76 \pm 7.93$ &$94.67 \pm 4.65$ &$97.89 \pm 4.71$ &$80.59 \pm 12.19$ &$78.85 \pm 7.48$ &$82.56 \pm 10.31$  \\ \midrule
\textbf{PropEn mix xy2xy} &$94.08 \pm 10.69$ &$96.84 \pm 2.88$ &$97.37 \pm 5.63$ &$98.90 \pm 2.02$ &$96.01 \pm 3.87$ &$90.51 \pm 10.54$  \\ \midrule
\textbf{PropEn x2x} &$66.96 \pm 9.44$ &$68.42 \pm 15.57$ &$71.58 \pm 15.16$ &$49.23 \pm 15.84$ &$47.99 \pm 12.97$ &$50.71 \pm 8.39$  \\ \midrule
\textbf{PropEn xy2xy} &$84.70 \pm 7.20$ &$72.37 \pm 17.29$ &$78.86 \pm 8.48$ &$69.96 \pm 9.21$ &$60.68 \pm 17.14$ &$64.80 \pm 10.58$  \\ \midrule
\textbf{Explicit} &$100.00 \pm 0.00$ &$100.00 \pm 0.00$ &$100.00 \pm 0.00$ &$100.00 \pm 0.00$ &$100.00 \pm 0.00$ &$100.00 \pm 0.00$  \\ \midrule
\textbf{}  & \multicolumn{6}{c}{\textbf{novelty}}   \\ \midrule
\textbf{PropEn mix x2x} &$100.00 \pm 0.00$ &$100.00 \pm 0.00$ &$100.00 \pm 0.00$ &$100.00 \pm 0.00$ &$100.00 \pm 0.00$ &$100.00 \pm 0.00$  \\ \midrule
\textbf{PropEn mix xy2xy} &$100.00 \pm 0.00$ &$100.00 \pm 0.00$ &$100.00 \pm 0.00$ &$100.00 \pm 0.00$ &$100.00 \pm 0.00$ &$100.00 \pm 0.00$  \\ \midrule
\textbf{PropEn x2x} &$100.00 \pm 0.00$ &$100.00 \pm 0.00$ &$100.00 \pm 0.00$ &$100.00 \pm 0.00$ &$100.00 \pm 0.00$ &$100.00 \pm 0.00$  \\ \midrule
\textbf{PropEn xy2xy} &$100.00 \pm 0.00$ &$100.00 \pm 0.00$ &$100.00 \pm 0.00$ &$100.00 \pm 0.00$ &$100.00 \pm 0.00$ &$100.00 \pm 0.00$  \\ \midrule
\textbf{Explicit} &$100.00 \pm 0.00$ &$100.00 \pm 0.00$ &$100.00 \pm 0.00$ &$100.00 \pm 0.00$ &$100.00 \pm 0.00$ &$100.00 \pm 0.00$  \\ \midrule \midrule
\end{tabular}}
\end{table}

\begin{table}

\centering
\caption{Propen vs AE with guidance results on toy datasets, first step of optimization trajectory. $N$ is the number of samples, $d$ is the number of dimensions for projection (2D $\rightarrow$ $d$D datasets). Mean (std) over 10 repetitions of the experiment. For all metrics, higher is better.}
\resizebox{\textwidth}{!}{%
\tiny
\begin{tabular}{@{}lllllll@{}}
\toprule \toprule
          & \multicolumn{2}{c}{\textbf{N=50}}     & \multicolumn{2}{c}{\textbf{N=100}}                      \\ \midrule
          \textbf{} & \textbf{d=10} & \textbf{d=50} & \textbf{d=100}  & \textbf{d=10} & \textbf{d=50} & \textbf{d=100} \\
\midrule \midrule
\tiny
\textbf{}  & \multicolumn{6}{c}{\textbf{8-Gaussians}}   \\ \midrule \midrule
\textbf{}  & \multicolumn{6}{c}{\textbf{ratio imp}}   \\ \midrule
\textbf{PropEn mix x2x} &$82.71 \pm 11.66$ &$80.53 \pm 9.30$ &$82.24 \pm 7.41$ &$92.56 \pm 4.27$ &$94.02 \pm 4.05$ &$94.02 \pm 6.21$  \\ \midrule
\textbf{PropEn mix xy2xy} &$70.00 \pm 14.05$ &$82.71 \pm 8.44$ &$81.58 \pm 5.52$ &$90.38 \pm 6.09$ &$82.62 \pm 9.57$ &$89.74 \pm 7.55$  \\ \midrule
\textbf{PropEn x2x} &$81.87 \pm 12.65$ &$85.96 \pm 10.85$ &$78.95 \pm 8.15$ &$92.82 \pm 3.78$ &$93.08 \pm 6.74$ &$93.41 \pm 5.71$  \\ \midrule
\textbf{PropEn xy2xy} &$66.45 \pm 11.23$ &$78.20 \pm 8.28$ &$78.95 \pm 13.42$ &$88.28 \pm 5.31$ &$80.22 \pm 7.79$ &$82.05 \pm 9.85$  \\ \midrule
\textbf{Explicit} &$49.28 \pm 10.86$ &$51.20 \pm 6.27$ &$48.80 \pm 6.70$ &$50.12 \pm 6.42$ &$48.02 \pm 4.30$ &$50.12 \pm 3.69$  \\ \midrule
\textbf{}  & \multicolumn{6}{c}{\textbf{loglike}}   \\ \midrule
\textbf{PropEn mix x2x} &$-42.61 \pm 3.91$ &$-41.18 \pm 2.06$ &$-41.42 \pm 1.99$ &$-87.75 \pm 2.80$ &$-87.99 \pm 3.89$ &$-85.62 \pm 3.12$  \\ \midrule
\textbf{PropEn mix xy2xy} &$-43.81 \pm 3.17$ &$-42.28 \pm 2.05$ &$-42.15 \pm 2.37$ &$-87.73 \pm 3.09$ &$-89.18 \pm 3.40$ &$-88.79 \pm 4.11$  \\ \midrule
\textbf{PropEn x2x} &$-40.63 \pm 3.69$ &$-39.08 \pm 2.96$ &$-39.54 \pm 0.79$ &$-80.22 \pm 3.54$ &$-79.61 \pm 4.29$ &$-80.38 \pm 3.74$  \\ \midrule
\textbf{PropEn xy2xy} &$-43.62 \pm 4.15$ &$-40.37 \pm 2.51$ &$-40.84 \pm 4.53$ &$-82.36 \pm 3.51$ &$-84.59 \pm 6.19$ &$-84.04 \pm 4.75$  \\ \midrule
\textbf{Explicit} &$-45.00 \pm 2.50$ &$-44.11 \pm 2.45$ &$-44.64 \pm 1.71$ &$-99.35 \pm 3.01$ &$-99.63 \pm 3.84$ &$-98.90 \pm 3.63$  \\ \midrule
\textbf{}  & \multicolumn{6}{c}{\textbf{uniqness}}   \\ \midrule
\textbf{PropEn mix x2x} &$100.00 \pm 0.00$ &$100.00 \pm 0.00$ &$100.00 \pm 0.00$ &$100.00 \pm 0.00$ &$100.00 \pm 0.00$ &$100.00 \pm 0.00$  \\ \midrule
\textbf{PropEn mix xy2xy} &$100.00 \pm 0.00$ &$100.00 \pm 0.00$ &$100.00 \pm 0.00$ &$100.00 \pm 0.00$ &$100.00 \pm 0.00$ &$100.00 \pm 0.00$  \\ \midrule
\textbf{PropEn x2x} &$100.00 \pm 0.00$ &$100.00 \pm 0.00$ &$100.00 \pm 0.00$ &$100.00 \pm 0.00$ &$100.00 \pm 0.00$ &$100.00 \pm 0.00$  \\ \midrule
\textbf{PropEn xy2xy} &$100.00 \pm 0.00$ &$100.00 \pm 0.00$ &$100.00 \pm 0.00$ &$100.00 \pm 0.00$ &$100.00 \pm 0.00$ &$100.00 \pm 0.00$  \\ \midrule
\textbf{Explicit} &$100.00 \pm 0.00$ &$100.00 \pm 0.00$ &$100.00 \pm 0.00$ &$100.00 \pm 0.00$ &$100.00 \pm 0.00$ &$100.00 \pm 0.00$  \\ \midrule
\textbf{}  & \multicolumn{6}{c}{\textbf{novelty}}   \\ \midrule
\textbf{PropEn mix x2x} &$100.00 \pm 0.00$ &$100.00 \pm 0.00$ &$100.00 \pm 0.00$ &$100.00 \pm 0.00$ &$100.00 \pm 0.00$ &$100.00 \pm 0.00$  \\ \midrule
\textbf{PropEn mix xy2xy} &$100.00 \pm 0.00$ &$100.00 \pm 0.00$ &$100.00 \pm 0.00$ &$100.00 \pm 0.00$ &$100.00 \pm 0.00$ &$100.00 \pm 0.00$  \\ \midrule
\textbf{PropEn x2x} &$100.00 \pm 0.00$ &$100.00 \pm 0.00$ &$100.00 \pm 0.00$ &$100.00 \pm 0.00$ &$100.00 \pm 0.00$ &$100.00 \pm 0.00$  \\ \midrule
\textbf{PropEn xy2xy} &$100.00 \pm 0.00$ &$100.00 \pm 0.00$ &$100.00 \pm 0.00$ &$100.00 \pm 0.00$ &$100.00 \pm 0.00$ &$100.00 \pm 0.00$  \\ \midrule
\textbf{Explicit} &$100.00 \pm 0.00$ &$100.00 \pm 0.00$ &$100.00 \pm 0.00$ &$100.00 \pm 0.00$ &$100.00 \pm 0.00$ &$100.00 \pm 0.00$  \\ \midrule
\textbf{}  & \multicolumn{6}{c}{\textbf{pinwheel}}   \\ \midrule \midrule
\textbf{}  & \multicolumn{6}{c}{\textbf{ratio imp}}   \\ \midrule
\textbf{PropEn mix x2x} &$76.13 \pm 8.52$ &$74.69 \pm 9.07$ &$70.18 \pm 14.31$ &$84.98 \pm 4.55$ &$85.58 \pm 8.33$ &$81.79 \pm 7.09$  \\ \midrule
\textbf{PropEn mix xy2xy} &$63.52 \pm 8.45$ &$77.89 \pm 6.86$ &$70.39 \pm 14.60$ &$70.70 \pm 8.74$ &$63.53 \pm 12.08$ &$76.15 \pm 8.02$  \\ \midrule
\textbf{PropEn x2x} &$80.77 \pm 9.76$ &$77.78 \pm 7.34$ &$84.21 \pm 9.85$ &$86.15 \pm 5.57$ &$82.05 \pm 5.73$ &$84.05 \pm 7.00$  \\ \midrule
\textbf{PropEn xy2xy} &$61.79 \pm 5.53$ &$60.53 \pm 14.07$ &$67.42 \pm 13.04$ &$61.54 \pm 14.35$ &$50.43 \pm 7.01$ &$57.81 \pm 14.70$  \\ \midrule
\textbf{Explicit} &$51.46 \pm 11.62$ &$46.62 \pm 8.34$ &$49.04 \pm 7.47$ &$48.48 \pm 4.65$ &$49.65 \pm 4.33$ &$50.58 \pm 3.82$  \\ \midrule
\textbf{}  & \multicolumn{6}{c}{\textbf{loglike}}   \\ \midrule
\textbf{PropEn mix x2x} &$-38.12 \pm 2.63$ &$-38.42 \pm 3.01$ &$-39.73 \pm 5.12$ &$-80.66 \pm 5.16$ &$-82.53 \pm 2.71$ &$-82.38 \pm 5.54$  \\ \midrule
\textbf{PropEn mix xy2xy} &$-39.38 \pm 2.92$ &$-37.39 \pm 2.91$ &$-39.10 \pm 3.74$ &$-87.02 \pm 8.20$ &$-86.08 \pm 5.19$ &$-84.10 \pm 6.25$  \\ \midrule
\textbf{PropEn x2x} &$-37.56 \pm 1.90$ &$-37.00 \pm 4.90$ &$-36.36 \pm 6.56$ &$-80.00 \pm 6.26$ &$-76.48 \pm 2.88$ &$-76.90 \pm 3.46$  \\ \midrule
\textbf{PropEn xy2xy} &$-40.92 \pm 3.45$ &$-41.60 \pm 3.88$ &$-39.22 \pm 4.34$ &$-86.67 \pm 6.00$ &$-89.45 \pm 8.21$ &$-91.71 \pm 13.46$  \\ \midrule
\textbf{Explicit} &$-40.40 \pm 1.93$ &$-39.58 \pm 2.24$ &$-40.15 \pm 1.81$ &$-90.35 \pm 5.03$ &$-88.91 \pm 3.16$ &$-87.91 \pm 4.62$  \\ \midrule
\textbf{}  & \multicolumn{6}{c}{\textbf{uniqness}}   \\ \midrule
\textbf{PropEn mix x2x} &$100.00 \pm 0.00$ &$100.00 \pm 0.00$ &$100.00 \pm 0.00$ &$100.00 \pm 0.00$ &$100.00 \pm 0.00$ &$100.00 \pm 0.00$  \\ \midrule
\textbf{PropEn mix xy2xy} &$100.00 \pm 0.00$ &$100.00 \pm 0.00$ &$100.00 \pm 0.00$ &$100.00 \pm 0.00$ &$100.00 \pm 0.00$ &$100.00 \pm 0.00$  \\ \midrule
\textbf{PropEn x2x} &$100.00 \pm 0.00$ &$100.00 \pm 0.00$ &$100.00 \pm 0.00$ &$99.23 \pm 2.43$ &$100.00 \pm 0.00$ &$100.00 \pm 0.00$  \\ \midrule
\textbf{PropEn xy2xy} &$100.00 \pm 0.00$ &$100.00 \pm 0.00$ &$100.00 \pm 0.00$ &$100.00 \pm 0.00$ &$100.00 \pm 0.00$ &$100.00 \pm 0.00$  \\ \midrule
\textbf{Explicit} &$100.00 \pm 0.00$ &$100.00 \pm 0.00$ &$100.00 \pm 0.00$ &$100.00 \pm 0.00$ &$100.00 \pm 0.00$ &$100.00 \pm 0.00$  \\ \midrule
\textbf{}  & \multicolumn{6}{c}{\textbf{novelty}}   \\ \midrule
\textbf{PropEn mix x2x} &$100.00 \pm 0.00$ &$100.00 \pm 0.00$ &$100.00 \pm 0.00$ &$100.00 \pm 0.00$ &$100.00 \pm 0.00$ &$100.00 \pm 0.00$  \\ \midrule
\textbf{PropEn mix xy2xy} &$100.00 \pm 0.00$ &$100.00 \pm 0.00$ &$100.00 \pm 0.00$ &$100.00 \pm 0.00$ &$100.00 \pm 0.00$ &$100.00 \pm 0.00$  \\ \midrule
\textbf{PropEn x2x} &$100.00 \pm 0.00$ &$100.00 \pm 0.00$ &$100.00 \pm 0.00$ &$100.00 \pm 0.00$ &$100.00 \pm 0.00$ &$100.00 \pm 0.00$  \\ \midrule
\textbf{PropEn xy2xy} &$100.00 \pm 0.00$ &$100.00 \pm 0.00$ &$100.00 \pm 0.00$ &$100.00 \pm 0.00$ &$100.00 \pm 0.00$ &$100.00 \pm 0.00$  \\ \midrule
\textbf{Explicit} &$100.00 \pm 0.00$ &$100.00 \pm 0.00$ &$100.00 \pm 0.00$ &$100.00 \pm 0.00$ &$100.00 \pm 0.00$ &$100.00 \pm 0.00$  \\ \midrule \midrule
\end{tabular}}
\end{table}

\end{document}